\newtheorem{theorem}{Theorem}[section]
\newtheorem*{namedtheorem}{\theoremname}
\newcommand{\theoremname}{testing}
\newtheorem{lemma}[theorem]{Lemma}
\newtheorem{claim}[theorem]{Claim}
\newtheorem{corollary}[theorem]{Corollary}
\newtheorem*{question*}{Question}
\theoremstyle{definition}
\newtheorem{definition}[theorem]{Definition}
\theoremstyle{plain}
\newtheorem{Alg}{Algorithm}
\renewenvironment{proof}{\noindent{\textbf{Proof:}}} {$\blacksquare$\vskip \belowdisplayskip}
\newcommand{\ignore}[1]{}
\newcommand{\E}{\mathop{\bf E\/}}
\newcommand{\Var}{\mathop{\bf Var\/}}
\newcommand{\poly}{\mathrm{poly}}
\newcommand{\R}{\mathbb R}
\newcommand{\calT}{\mathcal{T}}
\newcommand{\norm}[1]{\left\lVert #1 \right\rVert}
\renewcommand{\floatc@ruled}[2]{\vspace{2pt}{\@fs@cfont #1.\:} #2 \par
 \vspace{1pt}}
\newcommand{\mypseudocodelabel}[1]{\hfil}
\title{{\huge Learning Topic Models --- Going beyond SVD}}
\author{Sanjeev Arora \thanks{Princeton University, Computer Science Department and Center for Computational Intractability.
Email: {\tt arora@cs.princeton.edu}. This work is supported by the NSF grants CCF-0832797 and CCF-1117309.} \and
Rong Ge \thanks{Princeton University, Computer Science Department and Center for Computational Intractability.
Email: {\tt rongge@cs.princeton.edu}. This work is supported by the NSF grants CCF-0832797 and CCF-1117309.} \and
Ankur Moitra \thanks{Institute for Advanced Study, School of Mathematics.
Email: {\tt moitra@ias.edu}.
Research supported in part 
by NSF grant No.
DMS-0835373 and by an NSF Computing and Innovation Fellowship.}
}
\begin{document}
\maketitle

\begin{abstract}
{\em Topic Modeling} is an approach used for automatic comprehension
and classification of data in a variety of settings, and perhaps the canonical application is in
uncovering thematic structure in a corpus of documents. A number of foundational works both in machine learning \cite{Hof} and in theory \cite{PRTV} have suggested a
probabilistic model for documents, whereby documents arise as a convex combination of (i.e. distribution on) a small number of {\em
  topic} vectors, each topic vector being  a distribution on words
(i.e. a vector of word-frequencies). Similar models have since been used in
a variety of application areas; the {\em
  Latent Dirichlet Allocation} or LDA model of Blei et al.\ is especially popular.

Theoretical studies of topic modeling focus on learning the model's parameters
{\em assuming the data is actually generated from it.} Existing
approaches for the most part rely on {\em Singular Value Decomposition} (SVD), and
consequently have one of two limitations: these works need to either assume that each document contains only one topic, or
else can only recover the {\em span} of the topic vectors instead of the
topic vectors themselves. 

This paper formally justifies {\em Nonnegative Matrix Factorization} (NMF) as
a main tool in this context,  which is  an analog of SVD
where all vectors are nonnegative. Using this tool we give the first 
polynomial-time algorithm for learning topic models without the above
two limitations. The algorithm uses a fairly mild assumption about the
underlying topic matrix called {\em separability}, which is
usually found to hold in real-life data.  A compelling feature of our algorithm is that it
generalizes to models that incorporate topic-topic correlations, such as the
{\em Correlated
Topic Model} (CTM) and the {\em Pachinko Allocation Model} (PAM).

We hope that this paper will motivate further theoretical results that use NMF as a replacement for SVD -- just
as NMF has come to replace SVD in many applications. 
\end{abstract}

\setcounter{page}{0} \thispagestyle{empty}

\newpage

\section{Introduction}\label{sec:back}


Developing tools for automatic comprehension and classification of data ---web
pages, newspaper articles, images, genetic sequences, user ratings --- is a holy
grail of machine learning. {\em Topic Modeling} is an approach that
has proved successful in all of the aforementioned settings, though for concreteness here we will focus
 on uncovering thematic structure of a corpus of documents
(see e.g. \cite{survey}, \cite{LDA}). 

In order to learn structure one has to posit the {\em existence} of
structure, and in topic models one assumes a {\em generative model}
for a collection of documents. Specifically, each document  is represented
as a vector of word-frequencies (the {\em bag of words} representation).
Seminal papers in theoretical CS (Papadimitriou et al.~\cite{PRTV}) and
machine learning (Hofmann's {\em Probabilistic Latent Semantic
  Analysis}~\cite{Hof}) suggested that
documents arise as a convex combination of (i.e. distribution on) a small number of {\em
  topic} vectors, where each topic vector is a distribution on words
(i.e. a vector of word-frequencies). Each convex combination of topics
thus is itself a distribution on words, and the document is assumed to
be generated by drawing $N$ independent samples from it.  Subsequent
work makes specific choices for the distribution used to generate topic
combinations ---the well-known {\em Latent Dirichlet Allocation} (LDA) model of Blei et al~\cite{LDA} 
hypothesizes a {\em Dirichlet} distribution (see Section~\ref{sec:dirichlet}). 

In machine learning, the prevailing approach is to use local search (e.g. \cite{DLR}) or
other heuristics \cite{WJ} in an attempt to find a {\em maximum likelihood} fit to
the above model.  For example, fitting to a corpus of newspaper articles may
reveal $50$ topic vectors corresponding to, say, politics, sports,
weather, entertainment etc., and a particular article 
could be explained as a $(1/2, 1/3, 1/6)$-combination of the topics
politics, sports, and entertainment.  Unfortunately (and not surprisingly), the maximum
likelihood estimation is $NP$-hard (see Section~\ref{sec:mle}) and
consequently when using this paradigm, it seems necessary to rely on unproven
heuristics even though these have well-known limitations (e.g. getting stuck
in a local minima \cite{DLR,RW}). 



The  work of  Papadimitriou et al~\cite{PRTV} (which also
formalized the topic modeling problem) and a long line of subsequent work
have attempted to give {\em provable}
guarantees for the problem of learning the model parameters
{\em assuming the data is actually generated from it.} This is in
contrast to a maximum likelihood approach, which asks to find the
closest-fit model for arbitrary data. The principal algorithmic
problem is the following (see Section~\ref{sec:our} for more details):

\begin{quotation}
\noindent {\bf Meta Problem in Topic Modeling:} {\em There is an unknown topic matrix
  $A$ with nonnegative entries that is dimension $n\times r$,
  and a stochastically generated unknown matrix $W$ that is dimension $r \times
  m$. Each column of $AW$ is viewed as a probability distribution on rows,
  and for each column we are given $N \ll n$ i.i.d. samples from the associated distribution.}
  
  \bigskip

\noindent {\bf Goal:} {\em Reconstruct $A$ and parameters
  of the generating distribution for $W$.}
\end{quotation}

The challenging aspect of this problem is that we wish to recover {\em nonnegative} matrices
$A, W$ with small inner-dimension $r$.  
The general problem of finding nonnegative factors $A, W$
of specified dimensions when given the matrix $AW$ (or a close approximation)
 is called the {\em Nonnegative Matrix Factorization} (NMF) problem~(see~\cite{LS99}, and \cite{AGKM} for a longer history) and it is NP-hard~\cite{Vav}.
Lacking a tool to solve such  problems, theoretical
work has generally relied on the {\em Singular Value Decomposition}
(SVD) which given the matrix $AW$ will instead find factors $U, V$ with both positive and negative
entries.  
SVD can be used as a tool for clustering -- in which case one needs to assume that each document has only {\em one} 
topic. In Papadimitriou et
al \cite{PRTV} this is called the {\em pure documents} case and is solved
under strong assumptions about the topic matrix $A$ (see also~\cite{McS} and \cite{AHK} which uses the method of moments instead). 
Alternatively, other papers use
SVD to recover the {\em span} of the columns of $A$ (i.e. the
topic vectors)~\cite{AFKMS}, \cite{KS1}, \cite{KS2}, which suffices
for some applications such as computing the inner product of two document vectors (in the space spanned
by the topics) as a measure of their {\em similarity}. 



These limitations of existing approaches ---either restricting to one
topic per document, or else learning only the span of the topics instead of
the topics themselves---are quite serious. In practice documents are much more faithfully described as a distribution on topics and indeed for a wide range of applications one needs the actual topics and not just their span -- such as when browsing a collection of documents without a particular query
phrase in mind, or tracking how topics evolve over time (see  \cite{survey} for a survey of various applications). Here we consider what we believe to be a much
weaker assumption -- {\em separability}. Indeed, this property has already been identified as a natural one in the machine
learning community \cite{DS} and has been empirically observed to hold in topic
matrices fitted to  various types of data~\cite{Blei}. 

Separability requires that each topic has some near-perfect indicator word -- a word that we call
the {\em anchor word} for this topic--- that appears with reasonable
probability in that topic but with negligible probability in all other topics
(e.g., ``soccer'' could be an anchor word for the topic
``sports''). We give a formal definition in Section~\ref{sec:our}. This property is particularly natural in the context of topic modeling, 
where the number of distinct words (dictionary size) is very large compared to the number
of topics. In a typical application, it is common to have a dictionary size in the thousands or tens of thousands,
but the number of topics is usually somewhere in the range from $50$ to $100$. 
Note that separability does {\em not} mean that the anchor word always occurs
(in fact, a typical document  may be very likely to contain {\em no}
anchor words). Instead, it dictates that when an anchor word does occur, it
is a strong indicator that the corresponding topic is in the
mixture used to generate the document.

Recently, we gave a polynomial time algorithm to solve NMF under the condition that the topic
matrix $A$ is separable \cite{AGKM}. The intuition that underlies this algorithm is that the set of anchor words
can be thought of as extreme points (in a geometric sense) of the
dictionary. This condition can be used to identify all of the anchor
words and then also the nonnegative factors.
Ideas from this algorithm are a key ingredient in our present paper, but our focus is on the question:

\begin{question*}
What if we are not given the true matrix $AW$, but are instead given a few samples (say, $100$ samples) from the distribution represented
by each column? 
\end{question*}

The main technical challenge in adapting our earlier NMF algorithm
 is that each document vector is a {\em very poor} approximation to the corresponding column
of $AW$ ---it is {\em too noisy} in any reasonable measure of
noise. Nevertheless, the core insights of our
NMF algorithm still apply.  
Note that it is impossible to learn the matrix $W$
to within arbitrary accuracy. (Indeed, this is information
theoretically impossible even if we knew the topic matrix $A$ and the distribution
from which the columns of $W$ are generated.) So we {\em cannot} in
general give an estimator that converges to the true matrix $W$, and
yet we {\em can} give an estimator that converges to the
true topic matrix $A$! (For an overview of our algorithm, see the
first paragraph of Section~\ref{sec:mainalg}.)

We hope that this application of our NMF algorithm is just a starting point
and other theoretical results will start using NMF as a replacement for SVD -- just
as NMF has come to replace SVD in several applied settings.

\subsection{Our Results}\label{sec:our}

Now we precisely define the topic modeling (learning) problem which
was informally introduced above.
There is an unknown {\em topic matrix} $A$ which is dimension $n \times r$ (i.e. $n$ is the
dictionary size) and each column of $A$ is a distribution on
$[n]$. There is an unknown $r \times m$ matrix $W$ whose each column
is itself a distribution (aka convex combination) on $[r]$. The
columns of $W$ are
i.i.d.\ samples from a distribution $\mathcal{T}$ which belongs to a known family, e.g.,
Dirichlet distributions, but whose parameters are unknown. Thus each
column of $AW$ (being a convex combination of distributions) is itself a distribution
on $[n]$, and the algorithm's input consists of $N$ i.i.d. samples for
each column of $AW$.  Here $N$ is the document size and is assumed to be a constant for simplicity. Our algorithm can be easily adapted to work when the documents have different sizes.

The algorithm's running time will necessarily depend upon various
model parameters, since distinguishing a very small parameter from $0$
imposes a lower bound on the number of samples needed.
The first such parameter is a  quantitative version  of {\em separability}, 
which was presented above as a natural assumption
in context of topic modeling. 




\begin{definition} [$p$-Separable Topic Matrix] \label{def:psep} An $n\times r$
  matrix $A$ is {\em $p$-separable} if for each $i$  there is some row
  $ \pi(i)$ of $A$ that has a single nonzero entry which is in the
  $i^{th}$ column and it is at least $p$.
\end{definition}



The next parameter measures the lowest probability with which a topic
occurs in the distribution that generates columns of $W$. 

\begin{definition}[Topic Imbalance]
The {\em topic imbalance} of the model is the ratio between the
largest and smallest expected entries in a column of $W$, in other
words, $a =\max_{i,j\in [r]} \frac{\E[X_i]}{\E[X_j]}$ where $X \in \R^r$ is
a random weighting of topics chosen from the distribution.
\end{definition}

Finally, we require that topics stay identifiable despite
sampling-induced noise. To formalize this,
we define a matrix that will be important throughout this paper:


\begin{definition}[Topic-Topic Covariance Matrix $R(\mathcal{T})$] \label{def:varcovar}
If $\mathcal{T}$ is the distribution that generates the columns of $W$, then 
$R(\mathcal{T})$ is defined as an $r \times r$ matrix whose $(i, j)$th
entry is $E[X_i X_j]$ where $X_1, X_2, ... X_r$ is a vector chosen from $\calT$.
\end{definition}

Let $\gamma > 0$ be a lower bound on the 
$\ell_1$-condition number of the matrix $R(\mathcal{T})$. This is
defined in   Section~\ref{sec:cond}, but for a $r \times r$ matrix it is within a factor of $\sqrt{r}$ of the smallest singular value. 
Our algorithm will work for any $\gamma$, but the number of documents we require will depend (polynomially) on $1/\gamma$: 

\begin{theorem}[Main]\label{thm:main}
There is a polynomial time algorithm that learns the parameters of a topic model
if the number of documents is at least 
$$m = \max\left\{O\left(\frac{\log n \cdot a^4r^6}{\epsilon^2p^6\gamma^2 N}\right), O\left(\frac{\log r \cdot a^2 r^4}{\gamma^2}\right)\right\},$$
where the three numbers $a, p, \gamma$ are as defined above. 
The algorithm learns the topic-term matrix $A$ up to 
additive error $\epsilon$. Moreover, when the number of documents is also larger than $O\left(\frac{\log r\cdot r^2}{\epsilon^2}\right)$ the algorithm can learn the topic-topic covariance matrix $R(\mathcal{T})$ up to additive error $\epsilon$.
\end{theorem}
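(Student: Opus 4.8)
The plan is to reduce the learning problem to a noisy-NMF problem on a matrix that can be estimated from the samples, and then combine the separable-NMF machinery of \cite{AGKM} with a concentration argument controlling the sampling noise. The central object is the empirical word-word co-occurrence statistic: for each document pick two of its $N$ words at random (this requires $N \ge 2$) and form the $n \times n$ matrix $Q$ whose $(u,v)$ entry is the empirical frequency that word $u$ is the first pick and word $v$ is the second. Since the two words are conditionally independent given the topic mixture $W_j$ of document $j$, one checks that $\E[Q] = A R(\mathcal{T}) A^\top =: \widetilde{Q}$, and this is exactly a product of the form $A' W'$ with $A' = A$ and $W' = R(\mathcal{T}) A^\top$; moreover $W'$ inherits separability-compatible structure because the anchor rows $\pi(i)$ of $A$ single out scaled copies of the rows of $R(\mathcal{T}) A^\top$. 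So the anchor words of $A$ are still recoverable as the (approximate) vertices of the convex hull of the rows of $Q$, in the spirit of the geometric insight behind \cite{AGKM}.

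The steps, in order, are: (i) show $\E[Q] = \widetilde{Q}$ and record the block/scaling structure that makes $\widetilde Q$ a separable NMF instance; (ii) prove a concentration bound $\norm{Q - \widetilde Q}$ small (in an appropriate norm, row-wise or $\ell_\infty$) with high probability once $m = \Omega(\log n \cdot (\text{stuff}) / (\epsilon^2 N))$ — the $1/N$ is the payoff of having $N$ words per document, and the $\log n$ is a union bound over the $n^2$ entries or over anchor candidates; (iii) run the robust separable-NMF algorithm of \cite{AGKM} on $Q$ to extract the anchor words and then a matrix $\widehat A$ with $\norm{\widehat A - A} \le \epsilon$, where the error amplification from noise level to output accuracy is controlled by the $\ell_1$-condition number $\gamma$ of $R(\mathcal{T})$ and by the separability parameter $p$ and imbalance $a$ — this is where the factors $p^6, a^4, r^6, 1/\gamma^2$ in the first term of $m$ come from; (iv) for the second, dimension-independent term $O(\log r \cdot a^2 r^4 / \gamma^2)$: this is the sample complexity needed to estimate the $r \times r$ matrix $R(\mathcal{T})$ (and hence the $\ell_1$-condition number / invertibility structure) to the precision the NMF step needs, a union bound over $r^2$ rather than $n^2$ entries; (v) finally, once $\widehat A$ is known and invertible on the anchor rows, recover $R(\mathcal{T})$ by solving a linear system — restricting $Q$ to the anchor rows of both coordinates gives (a diagonal rescaling of) $R(\mathcal{T})$ directly, which explains the extra $O(\log r \cdot r^2 / \epsilon^2)$ condition for the covariance guarantee.

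The main obstacle is step (ii)–(iii): the naive bound on $\norm{Q - \widetilde Q}$ is far too weak because an individual document is a terrible estimate of its column of $AW$, so one cannot simply invoke a black-box matrix-perturbation bound for NMF. The fix is that we never need per-column accuracy; we only need the \emph{aggregate} matrix $Q$ to be close to $\widetilde Q$, and averaging over $m$ documents buys this. Concretely, each document contributes an independent mean-$\widetilde Q/m$ term with bounded entries, so a Bernstein/matrix-Bernstein (or entrywise Chernoff plus union bound) argument gives deviation $\widetilde O(1/\sqrt{mN})$ per entry; the delicate part is tracking how this entrywise error propagates through the anchor-finding step, where we must ensure that no non-anchor word is erroneously promoted to a vertex and no anchor word is missed — this is precisely where the robustness parameters of the \cite{AGKM} algorithm, the gap $p$, and the conditioning $\gamma$ enter quantitatively, and where the bulk of the (omitted) calculation lies. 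A secondary subtlety is that $R(\mathcal{T})$ need only be recovered up to the unknown diagonal rescaling induced by anchor-word probabilities; one must argue this rescaling is itself estimable (it is, from the corresponding diagonal entries of $Q$) and that dividing by it does not blow up the error given the lower bound $p$ on anchor-word weights.
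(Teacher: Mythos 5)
Your overall route is the paper's: form an empirical word--word co-occurrence matrix whose expectation is $AR(\mathcal{T})A^T$, view it as a separable NMF instance with factors $A$ and $R(\mathcal{T})A^T$ (robustly simplicial with parameter $\Omega(\gamma p)$ via composition of $\ell_1$ condition numbers), extract almost-anchor words with a robust NMF step, and then recover $A$ and the covariance from the anchor rows, with the second and third terms of $m$ playing exactly the roles you assign them. However, two of your steps, as written, would fail.

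First, the estimator in step (i) --- one random pair of words per document --- cannot deliver the $1/N$ factor you claim in step (ii): each document then contributes a single pair sample, so the per-entry deviation is $\Theta(1/\sqrt{m})$ independent of $N$. The paper instead splits each document into two halves and sets $Q=\frac{4}{N^2m}\tilde{M}\tilde{M}'^T$; conditional independence of the halves given $W$ makes this unbiased for $\frac{1}{m}AWW^TA^T$ and gives per-document variance $O(1/N)$ per entry, hence deviation $O(1/\sqrt{mN})$. This is fixable (use all pairs or split), but your construction does not match the stated bound. Second, and more substantively, the diagonal rescaling: restricting $Q$ to anchor rows and columns gives $DRD$ and $DRA^T$, and you propose to estimate $D$ ``from the corresponding diagonal entries of $Q$''; but those entries equal $D_{ii}^2R_{ii}$ with $R_{ii}$ unknown, so they do not determine $D$, and without $D$ the recovery of $A$ and $R$ is underdetermined. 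The paper's resolution is the key identity you are missing: since the columns of $A$ sum to $1$, $A^T\vec{1}=\vec{1}$, so the row sums of the anchor rows of $Q$ equal $DR\vec{1}$, and solving $DRD\,\vec{z}=DR\vec{1}$ yields $\vec{z}=D^{-1}\vec{1}$, after which $A^T$ and $R$ are read off by matrix inversion; the bulk of the paper's proof then controls how this degrades when the anchors are only approximate (anchor block $DE$ with $E\approx I$) and when $Q$ carries additive noise. A smaller omission: the paper also merges all rare words so that $n\le 4ar/\epsilon$, because the NMF step works with row-normalized $Q$ and normalizing tiny rows would otherwise amplify the entrywise noise beyond the stated polynomial factors.
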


As noted earlier, we are able to recover the topic matrix even though
we do not always recover the parameters  of the column
distribution $\mathcal{T}$. In some special cases we can also recover the parameters of $\mathcal{T}$,
e.g. when this distribution is Dirichlet,
as happens in the popular {\em Latent Dirichlet Allocation} (LDA) model \cite{LDA,survey}. 
In Section~\ref{sec:conddir} we compute a lower bound on the $\gamma$
parameter for the Dirichlet distribution,
which allows us to
apply our main learning algorithm, and also the parameters of $\calT$ can
be recovered from the co-variance matrix $R(\calT)$ (see Section~\ref{sec:recovdir}).



Recently the basic LDA model has been refined
to allow correlation among different topics, which is more
realistic.  See
for example the {\em Correlated Topic Model} (CTM) \cite{BL1} and
the {\em Pachinko Allocation Model} (PAM) \cite{LM}. 
A compelling aspect of our algorithm is that it extends to these models as well: we can learn the topic matrix, even
though we cannot always identify $\mathcal{T}$. (Indeed, the distribution $\calT$ in the Pachinko is not even identifiable: two different sets of parameters can generate exactly the same distribution)

\paragraph{Comparison with existing approaches.} 
(i) We rely crucially on separability. But note that
this assumption is weaker in some sense than the assumptions in {\em all}
prior works that provably learn the topic matrix. They
assume a single topic per document, which can be seen as a
strong separability assumption about $W$ instead of $A$ ---in {\em every}
column of $W$ only one entry is nonzero. By contrast, separability
only assumes a similar condition for a negligible fraction ---namely,
$r$ out of $n$--- of rows of $A$. Besides, it is found to actually
hold in topic matrices found using current heuristics.
(ii) Needless to say, existing theoretical approaches for recovering topic matrix $A$ couldn't handle topic
correlations at all since they only allow one topic per document. 
(iii) We remark that prior approaches that learn the span of $A$ instead of $A$ 
needed strong concentration bounds on eigenvalues of random matrices, and 
thus require substantial document sizes (on the order of the number of words in the dictionary!). {\em By contrast we can work with
documents of $O(1)$ size.}

\vspace*{-0.1in}
\section{Tools for (Noisy) Nonnegative Matrix Factorization}

\label{sec:cond}






\subsection{Various Condition Numbers}

Central to our arguments will be various notions of matrices being ``far'' from
being low-rank.  The most interesting one for our purposes was introduced by Kleinberg and
Sandler \cite{KS2} in the context of collaborative filtering; and can be thought of as an
$\ell_1$-analogue to the smallest singular value of a matrix. 

\begin{definition}[$\ell_1$ Condition Number] If matrix $B$ has
  nonnegative entries and all rows sum to $1$ then its $\ell_1$
  Condition Number $\Gamma(B)$ is defined as:

\[
\Gamma(B) = \min_{\|x\|_1 = 1} \|xB\|_1.
\]

\noindent If $B$ does not have row sums of one then $\Gamma(B)$ is equal to $\Gamma(DB)$ where $D$ is the diagonal matrix such that $DB$ has row sums of one.
\end{definition} 

For example, if the rows of $B$ have disjoint support then $\Gamma(B) =1$ and in general the quantity $\Gamma(B)$ can be thought of a measure of how close two distributions on \emph{disjoint} sets of rows can be. Note that, if $x$ is an $n$-dimensional real vector, $\|x\|_2 \leq \|x\|_1 \leq \sqrt{n} \|x\|_2$ and hence (if $\sigma_{min}(B)$ is the smallest singular value of $B$), we have:
$$\frac{1}{\sqrt{n}} \sigma_{min}(B) \leq \Gamma(B) \leq \sqrt{m} \sigma_{min}(B).$$

The above notions of condition number will be most relevant in the context of the topic-topic covariance matrix $R(\mathcal{T})$. We shall always use $\gamma$ to denote the $\ell_1$ condition number of $R(\mathcal{T})$. The definition of condition number will be preserved even when we estimate the topic-topic covariance matrix using random samples. 

\begin{lemma}\label{lem:convergeR}
When $m> 5\log r/\epsilon_0^2$, with high probability the matrix $R = \frac{1}{m}WW^T$ is entry-wise close to $R(\mathcal{T})$ with error $\epsilon_0$. Further, when $\epsilon_0 < 1/4\gamma a r^2 $ where $a$ is topic imbalance, the matrix $R$ has $\ell_1$ condition number at least  $\gamma/2$.
\end{lemma}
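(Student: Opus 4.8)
The plan is to prove the two assertions separately: first, entrywise concentration of $R$ around $R(\mathcal{T})$ via a Chernoff/Hoeffding bound; second, stability of the $\ell_1$ condition number under such a perturbation, with topic imbalance $a$ used to control the row-normalization that the definition of $\Gamma$ invokes.

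For the concentration part: since the columns $W_1, \dots, W_m$ are i.i.d.\ draws from $\mathcal{T}$ and each is a distribution on $[r]$, every coordinate $(W_j)_i$ lies in $[0,1]$. Hence $R = \frac1m \sum_j W_j W_j^T$ is an average of i.i.d.\ matrices whose entries all lie in $[0,1]$, and $\E[R] = R(\mathcal{T})$ by definition. Fixing an entry $(i,k)$, Hoeffding's inequality gives $\Pr[\,|R_{ik} - R(\mathcal{T})_{ik}| > \epsilon_0\,] \le 2\exp(-2m\epsilon_0^2)$, which is at most $2r^{-10}$ once $m > 5\log r / \epsilon_0^2$. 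A union bound over the $r^2$ entries — together with the $r$ row sums $s_i = \sum_k R_{ik} = \frac1m\sum_j (W_j)_i$, which concentrate the same way — shows that with high probability $\|R - R(\mathcal{T})\|_\infty \le \epsilon_0$ and simultaneously $|s_i - \E[X_i]| \le \epsilon_0$ for every $i$. Here we use that $\sum_k R(\mathcal{T})_{ik} = \E[X_i \sum_k X_k] = \E[X_i]$, so the row sums of $R(\mathcal{T})$ are exactly the $\E[X_i]$; these are needed for the normalization step.

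For the condition-number part: recall $\Gamma(B)$ is evaluated on $DB$ where $D$ rescales rows of $B$ to sum to $1$. Let $D = \mathrm{diag}(1/\E[X_i])$ normalize $R(\mathcal{T})$ and $D' = \mathrm{diag}(1/s_i)$ normalize $R$, so $\Gamma(R(\mathcal{T})) = \min_{\|x\|_1 = 1} \|x D R(\mathcal{T})\|_1 \ge \gamma$ and $\Gamma(R) = \min_{\|x\|_1=1}\|x D' R\|_1$. Since $\sum_i \E[X_i] = 1$ and the imbalance is $a$, we get $\E[X_i] \ge 1/(ar)$ for all $i$, and because $\epsilon_0 < 1/(4\gamma a r^2) \le 1/(2ar)$ this also forces $s_i \ge 1/(2ar)$. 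Now write $D'R - D R(\mathcal{T}) = D'(R - R(\mathcal{T})) + (D' - D)R(\mathcal{T})$ and bound each summand in the sense $\|xM\|_1 \le \|x\|_1 \cdot \max_i \|M_i\|_1$ (max row $\ell_1$-norm): the first summand has rows of $\ell_1$-norm at most $\frac{1}{s_i}\cdot r\epsilon_0 \le 2ar^2\epsilon_0$, and the second has rows of $\ell_1$-norm $|\frac{1}{s_i} - \frac{1}{\E[X_i]}|\cdot \E[X_i] = \frac{|\E[X_i] - s_i|}{s_i} \le 2ar\epsilon_0$. Hence for every $x$ with $\|x\|_1 = 1$, $\|x D' R\|_1 \ge \|x D R(\mathcal{T})\|_1 - O(ar^2 \epsilon_0) \ge \gamma - O(ar^2\epsilon_0)$, and the hypothesis on $\epsilon_0$ drives the error term below $\gamma/2$, yielding $\Gamma(R) \ge \gamma/2$.

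The main obstacle is this second part: because $\Gamma$ is defined only after row-normalization, a uniform entrywise perturbation of $R(\mathcal{T})$ must be pushed through the data-dependent diagonal rescaling $D'$, and one has to verify $D'$ does not blow up — which is precisely where topic imbalance enters, lower-bounding $\E[X_i]$ and hence $s_i$. The concentration step itself is routine; the only mild care there is ensuring that the row sums of $R$, not just its individual entries, are close to $\E[X_i]$, which comes for free from the same Hoeffding bound.
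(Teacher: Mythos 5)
Your proof is correct and follows essentially the same route as the paper's (much terser) argument: a Chernoff/Hoeffding bound plus a union bound for the entrywise concentration, and then a perturbation bound on the row-normalized matrix in which the topic-imbalance estimate $\E[X_i] \ge 1/(ar)$ supplies exactly the ``extra factor $ar$ from normalization'' that the paper invokes. The only caveat is parsing/constants: the hypothesis must be read as $\epsilon_0 \lesssim \gamma/(ar^2)$ (this is what makes your comparison $\epsilon_0 \le 1/(2ar)$, your final step, and the sample bound $m = O(\log r \cdot a^2 r^4/\gamma^2)$ in Theorem~\ref{thm:main} all come out right), and with that reading your argument goes through up to the same constant-factor slack the paper's own proof allows itself.
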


\begin{proof}
Since $\E[W_iW_i^T] = R(\mathcal{T})$, the first part is just by Chernoff bound and union bound. The further part follows because $R(\mathcal{T})$ is 
$\gamma$ robustly simplicial, and the error can change the $\ell_1$ norm of $vR$ for any unit $v$ by at most $ar \cdot r\epsilon_0$. The extra factor $ar$ comes from the normalization to make rows of $R$ sum up to 1.
\end{proof}

In our previous work on nonnegative matrix factorization~\cite{AGKM} we defined  a different measure of ``distance'' from singular which is essential to the polynomial time algorithm for NMF:

\begin{definition}[$\beta$-robustly simplicial] \label{def:robustsimp}
If  each column of a matrix $A$ has unit $\ell_1$ norm, then we say it
is {\em $\beta$-robustly simplicial} if no column in $A$ has $\ell_1$ distance smaller than $\beta$ to the convex hull of the remaining columns in $A$. 
\end{definition}

The following claim clarifies the interrelationships of these latter
condition numbers.

\begin{claim}
\label{claim:conditionrelation}
(i) If $A$ is $p$-separable then $A^T$ has $\ell_1$ condition number at least $p$. (ii)
If  $A^T$ has all row sums equal to $1$ then $A$ is $\beta$-robustly
simplicial for $\beta =\Gamma(A^T)/2$.  
\end{claim}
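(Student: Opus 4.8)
The plan is to prove both parts by directly unwinding the definitions and testing them against one explicit vector. For part (i), observe first that since every column of $A$ is a distribution on $[n]$, every row of $A^T$ sums to $1$, so no normalization is needed and $\Gamma(A^T) = \min_{\|x\|_1 = 1}\|x A^T\|_1$, the minimum being over row vectors $x \in \R^r$. Fix such an $x$. By $p$-separability, for each topic $i \in [r]$ there is an anchor row $\pi(i)$ of $A$ whose only nonzero entry sits in column $i$ and has value $A_{\pi(i),i}\ge p$; moreover the rows $\pi(1),\dots,\pi(r)$ are pairwise distinct, since a single row cannot have its unique nonzero entry in two different columns. The $\pi(i)$-th coordinate of the row vector $x A^T$ equals $\sum_{k} x_k A_{\pi(i),k} = x_i A_{\pi(i),i}$, whose absolute value is at least $p\,|x_i|$. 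Summing the contributions of these $r$ distinct coordinates gives $\|x A^T\|_1 \ge \sum_{i} |x_i|\,A_{\pi(i),i} \ge p\sum_i |x_i| = p$, and since $x$ was an arbitrary unit vector, $\Gamma(A^T)\ge p$.

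For part (ii), I would fix a column index $k$ and an arbitrary point $v = \sum_{i\ne k}\lambda_i A_i$ in the convex hull of the remaining columns (so $\lambda_i \ge 0$, $\sum_{i\ne k}\lambda_i = 1$), and show $\|A_k - v\|_1 \ge \beta = \Gamma(A^T)/2$. Take the row vector $x\in\R^r$ with $x_k = 1$ and $x_i = -\lambda_i$ for $i\ne k$. Then $\|x\|_1 = 1 + \sum_{i\ne k}\lambda_i = 2$, and a coordinate-wise check shows $x A^T = (A_k - v)^{T}$, hence $\|x A^T\|_1 = \|A_k - v\|_1$. Since $A^T$ is row-stochastic by hypothesis, applying the definition of $\Gamma(A^T)$ to the unit vector $x/\|x\|_1$ yields $\Gamma(A^T) \le \|x A^T\|_1/2$, i.e. $\|A_k - v\|_1 \ge 2\,\Gamma(A^T) \ge \Gamma(A^T)/2 = \beta$. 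As $k$ and $v$ were arbitrary, and the columns of $A$ have unit $\ell_1$ norm (being nonnegative with column sums $1$), this says exactly that $A$ is $\beta$-robustly simplicial.

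Both arguments are short calculations with no real obstacle; the only points needing a little care are (a) observing in part (i) that the anchor rows $\pi(i)$ are distinct, so the $r$ coordinates whose magnitudes we add up genuinely do not collide, and (b) the $\|x\|_1 = 2$ bookkeeping in part (ii), which is precisely where the factor of $2$ in the statement comes from — in fact the argument even gives $\beta = 2\,\Gamma(A^T)$, so the stated bound $\Gamma(A^T)/2$ is comfortably loose.
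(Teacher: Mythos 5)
Your proof is correct, and since the paper states Claim~\ref{claim:conditionrelation} without giving a proof, your argument is exactly the intended unwinding of the definitions: for (i) the distinct anchor rows $\pi(1),\dots,\pi(r)$ contribute $|x_i|A_{\pi(i),i}\ge p|x_i|$ to disjoint coordinates of $xA^T$, giving $\|xA^T\|_1\ge p\|x\|_1$, and for (ii) testing $\Gamma(A^T)$ against the signed vector with $x_k=1$, $x_i=-\lambda_i$ (of $\ell_1$ norm $2$) converts the condition number bound into a lower bound on the distance from $A_k$ to the convex hull of the other columns. Your side remark is also accurate: this argument in fact gives $2\Gamma(A^T)$-robust simpliciality, strictly stronger than the stated $\Gamma(A^T)/2$, and since the paper only ever invokes the weaker constant (e.g.\ in Lemma~\ref{lem:compose}), nothing downstream is affected.
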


We shall see that the $\ell_1$ condition number for product of matrices is at least the product of $\ell_1$ condition number. The main application of this composition is to show that the matrix $R(\mathcal{T}) A^T$ (or the empirical version $RA^T$) is at least $\Omega(\gamma p)$-robustly simplicial. The following lemma will play a crucial role in analyzing our main algorithm:

\begin{lemma}[Composition Lemma] \label{lem:compose}
If $B$ and $C$ are matrices with $\ell_1$ condition number $\Gamma(B)\ge \gamma$ and $\Gamma(C)\ge \beta$ , then $\Gamma(BC)$ is at least $\beta \gamma$. Specificially, when $A$ is $p$-separable the matrix $R(\mathcal{T})A^T$ is at least $\gamma p /2$-robustly simplicial.
\end{lemma}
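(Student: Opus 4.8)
The plan is to get the Composition Lemma from the single elementary property that for a row-stochastic matrix $B$ the $\ell_1$ condition number is the minimum ``$\ell_1$-stretch'' of the map $x\mapsto xB$: directly from the definition, $\|xB\|_1 \ge \Gamma(B)\|x\|_1$ for every vector $x$ (homogenize by scaling $x$). Reducing the general statement to the row-stochastic case, sub-multiplicativity of $\Gamma$ under products should then fall out of applying this twice; the only real work is tracking the renormalizations.

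First I would record two easy facts: (a) a product of two row-stochastic matrices is again row-stochastic (so the product needs no renormalization), and (b) $\Gamma$ is unchanged when a matrix is multiplied on the left by a positive diagonal $D$ — rescaling rows only changes which diagonal one uses to row-normalize — and moreover $D(BC)=(DB)C$, so I may replace $B$ by its row-normalized form without changing either $\Gamma(B)$ or $\Gamma(BC)$. Using (b), take $B$ row-stochastic. When $C$ is also row-stochastic the inequality is immediate:
$$\|x(BC)\|_1=\|(xB)C\|_1\ \ge\ \Gamma(C)\,\|xB\|_1\ \ge\ \Gamma(B)\,\Gamma(C)\,\|x\|_1 ,$$
and since $BC$ is row-stochastic by (a) this says $\Gamma(BC)\ge\Gamma(B)\Gamma(C)$. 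For a general right factor $C$ one first row-normalizes $C$ and pushes the leftover diagonal through, so that the row-renormalized product $BC$ becomes (a reweighted copy $B''$ of $B$) $\times$ (the row-normalization of $C$), a product of two genuinely row-stochastic matrices; the displayed computation then gives $\Gamma(BC)\ge\Gamma(B'')\Gamma(C)$, and it remains to see $\Gamma(B'')\ge\Gamma(B)$. That last step — and the bookkeeping about how renormalizing the rows of $BC$ interacts with the inner index — is the one genuinely delicate point and where I expect the (routine) effort to go; in the application it is automatic, since there $C=A^T$ already has all row sums equal to $1$ (its rows are the columns of the topic matrix $A$, i.e.\ distributions), so $B''$ is just $B$ with its rows rescaled.

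For the ``specifically'' clause I would instantiate the lemma with $B=R(\mathcal{T})$ and $C=A^T$. Since $A$ is $p$-separable, Claim~\ref{claim:conditionrelation}(i) gives $\Gamma(A^T)\ge p$, while $\gamma$ is by definition a lower bound on $\Gamma(R(\mathcal{T}))$; hence the lemma yields $\Gamma(R(\mathcal{T})A^T)\ge\gamma p$. Claim~\ref{claim:conditionrelation}(ii) then converts a lower bound on the $\ell_1$ condition number into $\beta$-robust simpliciality with $\beta=\Gamma/2$, producing the claimed $\gamma p/2$. The only care needed here is to line the notions up: the $\ell_1$ condition number is phrased via left-multiplication, whereas Definition~\ref{def:robustsimp} is about convex hulls of columns, so one applies Claim~\ref{claim:conditionrelation}(ii) to the row-normalized matrix (same $\Gamma$) — equivalently, to the transpose — and inserts the standard column normalization so Definition~\ref{def:robustsimp} applies verbatim. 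I expect this transpose/normalization matching, rather than either inequality, to be the main obstacle to a fully rigorous write-up. (The empirical version $RA^T$ is identical, now using $\Gamma(R)\ge\gamma/2$ from Lemma~\ref{lem:convergeR}.)
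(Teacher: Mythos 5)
Your argument is essentially the paper's own: the same two-step chain $\|xBC\|_1 \ge \Gamma(C)\|xB\|_1 \ge \Gamma(B)\Gamma(C)\|x\|_1$, followed by Claim~\ref{claim:conditionrelation}(i) and then (ii) to conclude that $R(\mathcal{T})A^T$ is $\gamma p/2$-robustly simplicial; the paper writes exactly this chain (with the inequality signs inadvertently reversed) and does not discuss row-renormalization at all, so your bookkeeping is if anything more careful. One caution about the step you defer as routine: the claim that replacing $B$ by a column-rescaled copy $B''$ cannot decrease $\Gamma$ is false in general, and so is the lemma in its stated generality under the definition's renormalization convention --- for instance $B=\begin{pmatrix}1&0\\ 1/2&1/2\end{pmatrix}$ and $C=\mathrm{diag}(1,\epsilon)$ have $\Gamma(B)=1/3$ and $\Gamma(C)=1$, yet the row-normalized $BC$ has rows $(1,0)$ and $\bigl(\tfrac{1}{1+\epsilon},\tfrac{\epsilon}{1+\epsilon}\bigr)$, giving $\Gamma(BC)=\epsilon/(2+\epsilon)\ll \Gamma(B)\Gamma(C)$. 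So the composition bound genuinely needs the right factor to have (up to a common scalar) equal row sums; since $C=A^T$ is row-stochastic in the only instantiation used, your proof covers everything the paper actually relies on, exactly as the paper's does.
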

\begin{proof}
The proof is straight forward because for any vector $x$, we know $\norm{xBC}_1 \le \Gamma(C)\norm{xB}_1 \le \Gamma(C)\Gamma(B)\norm{x}_1$. 
For the matrix $R(\mathcal{T}) A^T$, by Claim~\ref{claim:conditionrelation} we know the matrix $A^T$ has $\ell_1$ condition number at least $p$. Hence $\Gamma(R(\mathcal{T}) A^T)$ is at least $\gamma p$ and again by Claim~\ref{claim:conditionrelation} the matrix is $\gamma p/2$-robustly simplicial.
\end{proof}

\vspace*{-0.1in}
\subsection{Noisy Nonnegative Matrix Factorization under Separability}

A key ingredient is an approximate NMF
algorithm from \cite{AGKM}, which
can recover an approximate nonnegative matrix factorization $\tilde{M} \approx AW$ when the $\ell_1$ distance between each row of $\tilde{M}$ and the corresponding row in $AW$ is small. We emphasize that this is not enough for our purposes, since the term-by-document matrix $\tilde{M}$ will have a substantial amount of noise (when compared to its expectation) precisely because the number of words in a document $N$ is much smaller than the dictionary size $n$. Rather, we will apply the following algorithm (and an improvement that we give in Section~\ref{sec:betternmf}) to the Gram matrix $\tilde{M}\tilde{M}^T$. 

\begin{theorem}[Robust NMF Algorithm~\cite{AGKM}]
\label{thm:separablenoise}

Suppose $M=AW$ where $W$ and $M$ are normalized to have rows sum up to 1,  
$A$ is separable and $W$ is $\gamma$-robustly simplicial. Let $\epsilon = O(\gamma^2)$. There is a polynomial time algorithm that given $\tilde{M}$ such that for all rows $\norm{\tilde{M}^i - M^i}_1<\epsilon$, finds a $W'$ such that $\norm{W'^i - W^i}_1 < 10\epsilon/\gamma+7\epsilon$. Further every row $W'^i$ in $W'$ is a row in $\tilde{M}$. The corresponding row in $M$ can be represented as $(1-O(\epsilon/\gamma^2))W^i + O(\epsilon/\gamma^2) W^{-i}$. Here $W^{-i}$ is a vector in the convex hull of other rows in $W$ with unit length in $\ell_1$ norm.

\end{theorem}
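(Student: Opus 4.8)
The plan is to follow the geometric approach to separable NMF. First I would record the picture implied by the hypotheses. Since $M=AW$ with $M$ and $W$ both row-normalized, each row $M^i=\sum_j A_{ij}W^j$ is a convex combination of the rows of $W$, so every row of $M$ lies in the simplex $\Delta:=\mathrm{conv}(W^1,\dots,W^r)$; separability of $A$ means that $r$ distinguished ``anchor'' rows of $M$ coincide exactly with the vertices $W^1,\dots,W^r$, while ``$W$ is $\gamma$-robustly simplicial'' means each $W^i$ has $\ell_1$-distance at least $\gamma$ from $\mathrm{conv}\{W^j:j\ne i\}$. Passing from $M$ to $\tilde M$ moves every row by at most $\epsilon$ in $\ell_1$, so the input is a point cloud that $\epsilon$-approximates $\Delta$ and that contains, near each vertex, at least one point (the perturbed anchor row). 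The target is to output $r$ rows of $\tilde M$, one close to each vertex.

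Next I would describe the algorithm, which isolates the ``robustly extreme'' rows. For each index $i$ let $d_i$ be the value of the linear program $\min\{\,\norm{\tilde M^i-\sum_{j\ne i}u_j\tilde M^j}_1 : u\ge 0,\ \sum_j u_j=1\,\}$, i.e.\ the $\ell_1$-distance from $\tilde M^i$ to the convex hull of the other rows; a small $d_i$ certifies that $\tilde M^i$ is essentially a convex combination of the others and hence not extreme, whereas a large $d_i$ flags $\tilde M^i$ as an anchor candidate. Because several rows may cluster near one vertex---in which case no single one of them has a large $d_i$---I would build the anchor list greedily: repeatedly pick the surviving row with the largest current $d_i$, add it to $W'$, delete all surviving rows within $\ell_1$-distance $O(\epsilon/\gamma)$ of it, recompute, and stop after $r$ rounds. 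Every output row is then a row of $\tilde M$ by construction.

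The correctness argument has two halves. For \emph{soundness}---a selected row is genuinely near a vertex---I would write its noiseless version as $M^j=c_kW^k+(1-c_k)y$ with $y\in\mathrm{conv}\{W^\ell:\ell\ne k\}$, so that robust simpliciality gives $\norm{M^j-W^k}_1=(1-c_k)\norm{y-W^k}_1\ge(1-c_k)\gamma$; since the thresholds force the selected row to lie within $O(\epsilon/\gamma)$ of some vertex $W^k$, we obtain $(1-c_k)\gamma=O(\epsilon/\gamma)$, hence $1-c_k=O(\epsilon/\gamma^2)$, which is exactly the stated representation $M^j=(1-O(\epsilon/\gamma^2))W^k+O(\epsilon/\gamma^2)W^{-k}$, and, after adding the $\epsilon$ sampling error and the LP slack, the bound $\norm{W'^k-W^k}_1<10\epsilon/\gamma+7\epsilon$. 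For \emph{completeness}---every vertex is covered---I would argue that if the perturbed anchor row near $W^i$ has small $d_i$, then it is $O(\epsilon)$-close to a convex combination of the remaining rows; passing to noiseless versions and using that $\{W^\ell\}$ is $\gamma$-robustly simplicial (so that $\norm{\sum_\ell d_\ell W^\ell}_1\gtrsim\gamma\norm{d}_1$ whenever $\sum_\ell d_\ell=0$) forces some \emph{other} row to have its $i$-th barycentric coordinate at least $1-O(\epsilon/\gamma)$, i.e.\ another row within $O(\epsilon/\gamma)$ of $W^i$. Either way, while vertex $W^i$ is still uncovered its cluster contains a row with $d$-value $\Omega(\gamma)-O(\epsilon)$, which exceeds the $d$-value of any row lying deep inside $\Delta$; so each of the $r$ greedy rounds captures a row near a fresh vertex, and after $r$ rounds all vertices are covered.

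The step I expect to be the main obstacle is the quantitative bookkeeping around the greedy loop: one must fix the ``loner'' threshold and the deletion radius---both of order $\epsilon/\gamma$---so simultaneously that (i) the candidate list is never exhausted before $r$ anchors are found, (ii) no two selected rows lie near the same vertex, and (iii) no selected row is $\Omega(\epsilon/\gamma)$-far from every vertex. Each of these requires the robust-simpliciality gap $\gamma$ to dominate an accumulated $O(\epsilon)$ perturbation term that has been divided by $\gamma$ once (in converting a barycentric deficit into an $\ell_1$ distance), and this is exactly what the hypothesis $\epsilon=O(\gamma^2)$ supplies. A secondary technical check is that the LP optimum $d_i$ tracks the true distance of $M^i$ to the relevant face of $\Delta$ up to an additive $O(\epsilon)$, so that all of the constants compose into the clean bounds $10\epsilon/\gamma+7\epsilon$ and $O(\epsilon/\gamma^2)$.
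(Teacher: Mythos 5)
Your setup (rows of $M$ live in the simplex spanned by the rows of $W$, anchors are the vertices, perturbation $\epsilon$ in $\ell_1$) is right, but there is a genuine gap in the selection rule. The statistic you propose, $d_i$ = the $\ell_1$-distance from $\tilde{M}^i$ to the convex hull of \emph{all} other rows, is not robust when several rows of $M$ lie near the same vertex --- a situation the hypotheses allow (separability guarantees at least one anchor row per topic, but nothing prevents two anchor words for the same topic, or additional rows with weight $1-o(1)$ on it). If each vertex is shared by two essentially identical rows, then \emph{every} row, anchors included, has $d_i = O(\epsilon)$: an anchor row is within $2\epsilon$ of its duplicate, and interior rows are within $2\epsilon$ of the hull of the perturbed anchors. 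Your greedy step ``pick the largest current $d_i$'' can therefore select a row deep inside the simplex whose noise happens to inflate $d_i$, and your post-selection deletion never removes the duplicate sitting next to a still-uncovered vertex, so the rows of that cluster keep small $d$-values in every later round. Consequently the assertion in your soundness step that ``the thresholds force the selected row to lie within $O(\epsilon/\gamma)$ of some vertex'' is exactly what is not established, and the completeness claim that an uncovered vertex always contributes a row with $d$-value $\Omega(\gamma)-O(\epsilon)$ is false in this scenario.

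The fix is to move the exclusion \emph{inside} the distance computation, per candidate row, rather than applying it only after a selection: test whether $\tilde{M}^j$ is far (more than $2\epsilon$) from the convex hull of the rows lying \emph{outside} an $O(\epsilon/\gamma)$-neighborhood of $\tilde{M}^j$. This is precisely the ``robust loner'' test with which the paper proves its sharpened variant, Theorem~\ref{thm:betterseparablenoise}, in Section~\ref{sec:betternmf} (note that Theorem~\ref{thm:separablenoise} itself is imported from \cite{AGKM}, whose algorithm also performs this per-row exclusion). With that statistic one shows a row passes the test if and only if it places weight at least $1-O(\epsilon/\gamma)$ on some $W^t$, then groups the passing rows by mutual $(O(\epsilon/\gamma),2\epsilon)$-closeness (connected components of the resulting graph) and outputs one row per component; your barycentric calculation $(1-c_k)\gamma \le \mathrm{dist}$ for soundness then goes through essentially as you wrote it. A secondary point your sketch glosses over: the thresholds are of order $\epsilon/\gamma$ but $\gamma$ is not given to the algorithm; the paper handles this by first running with the worst admissible value of $\gamma$ and then re-estimating $\gamma$ from the returned near-anchor rows.
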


In this paper we need a slightly different goal here  than in \cite{AGKM}. Our goal is not to recover estimates to the anchor words that are close in $\ell_1$-norm but rather to recover almost anchor words (word whose row in $A$ has almost all its weight on a single coordinate). Hence, we will be able to achieve better bounds by treating this problem directly, and we give a substitute for the above theorem. We defer the proof to Section~\ref{sec:betternmf}. 

\begin{theorem}
\label{thm:betterseparablenoise}
Suppose $M=AW$ where $W$ and $M$ are normalized to have rows sum up to 1,  
$A$ is separable and $W$ is $\gamma$-robustly simplicial. When $\epsilon < \gamma/100$  there is a polynomial time algorithm that given $\tilde{M}$ such that for all rows $\|\tilde{M}^i - M^i\|_1<\epsilon$, finds $r$ row (almost anchor words) in $\tilde{M}$. The $i$-th almost anchor word corresponds to a row in $M$ that  can be represented as $(1-O(\epsilon/\gamma))W^i + O(\epsilon/\gamma) W^{-i}$. Here $W^{-i}$ is a vector in the convex hull of other rows in $W$ with unit length in $\ell_1$ norm.
\end{theorem}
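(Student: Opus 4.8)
The plan is to follow the geometry of the separable NMF algorithm from \cite{AGKM} but to keep track of a sharper accounting of the error, since here we only need "almost anchor words" rather than $\ell_1$-close estimates of the rows of $W$. Recall the key geometric picture: since $A$ is separable, every true anchor row $M^{\pi(i)} = W^i$ is a vertex of the simplex (convex hull) $\mathrm{conv}(W^1,\dots,W^r)$, and $\gamma$-robust simpliciality says each vertex is $\ell_1$-distance $\ge \gamma$ from the convex hull of the others. Every row $M^j$ lies in this simplex, so $M^j = \sum_i c_{ij} W^i$ with $c_{\cdot j}$ a convex combination. The algorithm works with the perturbed rows $\tilde M^j$, which each lie within $\ell_1$-distance $\epsilon$ of the corresponding $M^j$; the task is to identify $r$ rows of $\tilde M$ that are "nearly vertices."

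First I would make precise the notion of a robust vertex test: a row $\tilde M^j$ is flagged as an (almost) anchor word if it is at $\ell_1$-distance more than, say, $2\epsilon$ from the convex hull of all the other rows of $\tilde M$ (equivalently, one can use the same extreme-point/LP subroutine as in \cite{AGKM}, which greedily collects rows that are far from the span of those already chosen). Step one is soundness in one direction: each of the $r$ true anchor rows survives this test, because $W^i$ is $\gamma$-far from $\mathrm{conv}(W^{-i})$ and $\epsilon \ll \gamma$, so even after $\epsilon$-perturbations of $W^i$ and of all the rows used to approximate $\mathrm{conv}(W^{-i})$, the distance stays above the threshold (using $\epsilon < \gamma/100$). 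Step two is the reverse direction: if a row $\tilde M^j$ passes the test, then the corresponding true row $M^j = \sum_i c_{ij} W^i$ must have $c_{ij} \ge 1 - O(\epsilon/\gamma)$ for some single $i$. This is the crux of the improved bound, and I would prove it by a direct distance estimate: if $M^j$ put weight $\le 1 - \delta$ on every vertex, then $M^j$ would be within $\ell_1$-distance $O(\delta)$ of $\mathrm{conv}(\{W^i\}_{i \ne i^*})$ where $i^*$ is the heaviest coordinate — wait, more carefully, one writes $M^j = c_{i^*j} W^{i^*} + (1-c_{i^*j}) W^{-j}$ with $W^{-j} \in \mathrm{conv}(W^{-i^*})$ of unit $\ell_1$ length, and then $\mathrm{dist}_1(M^j, \mathrm{conv}(W^{-i^*})) \le c_{i^*j}\,\mathrm{dist}_1(W^{i^*},W^{-j})\le 2 c_{i^*j}$; but we also need a lower bound of this distance in terms of $1-c_{i^*j}$'s complement... the clean statement is: $\mathrm{dist}_1(M^j, \mathrm{conv}(W^{-i^*})) \le 2(1 - c_{i^*j})$ is the wrong direction. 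Instead I would argue: the test passing means $\mathrm{dist}_1(\tilde M^j, \mathrm{conv}(\text{other }\tilde M^k)) > 2\epsilon$, so $\mathrm{dist}_1(M^j, \mathrm{conv}(W^1,\dots,W^r \text{ minus the vertex it represents})) > 0$ by a definite margin; combined with $M^j = \sum c_{ij}W^i$ and robust simpliciality, a short LP-duality / triangle-inequality computation forces $1 - \max_i c_{ij} = O(\epsilon/\gamma)$. Writing $i$ for the argmax and $W^{-i}$ for the unit-$\ell_1$ normalization of $\sum_{k\ne i} c_{kj} W^k$ then gives exactly $M^j = (1 - O(\epsilon/\gamma))W^i + O(\epsilon/\gamma)W^{-i}$.

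Finally I would check that the $r$ rows output correspond to $r$ \emph{distinct} vertices: two almost-anchor rows for the same $i$ are both within $O(\epsilon/\gamma)$ of $W^i$, hence within $O(\epsilon/\gamma)$ of each other, so the greedy "far from previously chosen" selection will never pick a second one once $O(\epsilon/\gamma)$ is below the selection threshold — this is where the hypothesis $\epsilon < \gamma/100$ is again used to keep all the implied constants comfortably small. The polynomial running time is inherited verbatim from \cite{AGKM}: the only primitive is solving $O(n)$ linear programs to test distance of a row to the convex hull of the others.

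The main obstacle I anticipate is the quantitative lower bound in step two — turning "$\tilde M^j$ is $2\epsilon$-far from the convex hull of the other perturbed rows" into "$M^j$ has $\ge 1 - O(\epsilon/\gamma)$ mass on one vertex." Getting the linear (rather than quadratic, as in the $O(\epsilon/\gamma^2)$ bound of Theorem \ref{thm:separablenoise}) dependence on $\epsilon/\gamma$ requires exploiting that we are measuring closeness to a single vertex directly, rather than routing through an $\ell_1$-reconstruction of the whole row of $W$; the argument must use robust simpliciality of $W$ to convert "small distance from a facet" into "small complementary weight" with only one factor of $1/\gamma$ lost, and care is needed that the perturbations of the $O(r)$ rows participating in the convex-hull approximation do not compound.
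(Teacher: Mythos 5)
Your high-level picture (anchor rows are vertices of the simplex spanned by the rows of $W$, and the test must certify that a row of $\tilde M$ is ``nearly a vertex'') matches the paper, but the specific test you propose has a genuine flaw, and the step you yourself flag as the crux is exactly the step the paper's proof is built to supply. The test ``flag $\tilde M^j$ if it is $\ell_1$-distance more than $2\epsilon$ from the convex hull of \emph{all} other rows of $\tilde M$'' is not sound: separability guarantees at least one anchor word per topic but nothing prevents several rows from lying at (or extremely near) the same vertex $W^i$ --- e.g.\ two anchor words for the same topic, or a word with $A_{j,i}=1-10^{-6}$. In that case the true anchor row is within $O(\epsilon)$ of another row, hence within your threshold of the convex hull of the others, and your Step~1 (``each of the $r$ true anchor rows survives this test'') fails; potentially no row for that topic is flagged at all. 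The paper circumvents this by not testing against all other rows: it defines an asymmetric, weight-based closeness ($M'^j$ is $(\delta,\epsilon)$-close to $M'^i$ if some convex combination putting weight $\ge 1-\delta$ on $M'^j$ is within $\epsilon$ of $M'^i$, computable by an LP), forms the $(6\epsilon/\gamma,2\epsilon)$-neighborhood of each row, and calls a row a \emph{robust loner} if it is more than $2\epsilon$ far from the convex hull of the rows \emph{outside} its neighborhood. Near-duplicates of a vertex land inside the neighborhood and so cannot mask it.

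This same device is what delivers the quantitative claim you could not close. In the paper, the direction ``non-anchor rows fail the test'' is immediate: if $A_{j,t}<1-10\epsilon/\gamma$ for every $t$, then (Lemma~\ref{lem:farfromcanonical}, a one-paragraph use of $\gamma$-robust simpliciality) no canonical (perturbed anchor) row lies in the neighborhood of $M'^j$, and since every row is within $2\epsilon$ of the convex hull of the canonical rows (Claim~\ref{claim:repsentationwitherror}), $M'^j$ is within $2\epsilon$ of the convex hull of its non-neighbors, hence not a robust loner. Thus every robust loner automatically has weight $\ge 1-10\epsilon/\gamma$ on a single topic --- the linear $O(\epsilon/\gamma)$ dependence comes for free from how the neighborhood radius $6\epsilon/\gamma$ is chosen, with no LP-duality computation needed; your sketch leaves this as an unproven hope. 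Conversely, canonical rows are shown to be robust loners because all their non-neighbors have weight $<1-6\epsilon/\gamma$ on that topic, so their convex hull stays $\ge 6\epsilon - 2\epsilon$ away. Two further points the paper handles that your proposal glosses over: (i) selecting $r$ rows hitting \emph{distinct} topics is done by connecting robust loners that are $(10\epsilon/\gamma,2\epsilon)$-close and taking one per connected component (your greedy ``far from previously chosen'' rule is a different, unspecified procedure that conflicts with your stated test), and (ii) the thresholds depend on $\gamma$, which the algorithm is not given; the paper bootstraps by first running with $\gamma=100\epsilon$, estimating the robust-simpliciality of the returned rows, and re-running. As written, your proof has a missing core argument and an unsound anchor test, so it does not yet establish the theorem.
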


\vspace*{-0.2in}
\section{Algorithm for Learning a Topic Model: Proof of Theorem~\ref{thm:main}}\label{sec:mainalg}

First it is important to understand why separability helps in nonnegative
matrix factorization, and specifically, the exact role played by the
anchor words. 
Suppose the NMF algorithm is given a matrix $AB$.
If $A$ is $p$-separable then this means that $A$ contains  a diagonal
matrix (up to row permutations). Thus a scaled copy of 
each row of $B$ is present as a row in $AB$. In fact, if we knew the anchor words of $A$,
then by looking at the corresponding rows of $AB$ we could``read off''
the corresponding row of $B$ (up to scaling), and use these in turn to
recover all of $A$. Thus the anchor words constitute the ``key'' that
``unlocks'' the factorization, and indeed the main step of our earlier
NMF algorithm was a geometric procedure to identify the anchor words.
When one is given a noisy version of $AB$, the analogous notion is
``almost anchor'' words, which correspond to rows of $AB$ that are ``very
close'' to rows of $B$; see Theorem~\ref{thm:betterseparablenoise}.

Now we sketch how to apply these insights to learning topic models.  Let $M$ denote the provided term-by-document
matrix, whose each column describes the empirical word frequencies in
the documents. It is obtained from sampling $AW$ and thus is an extremely noisy approximation to
$AW$. 
Our algorithm starts by forming the Gram matrix $MM^T$,
which can be thought of as an empirical word-word covariance matrix. In fact as
the number of documents increases $\frac{1}{m}MM^T$ tends to a limit
$Q = \frac{1}{m}E[AWW^T A],$ implying $Q = AR(\mathcal{T})A^T$. (See Lemma~\ref{lem:convergeQ}.)
Imagine that we are given the exact matrix $Q$ instead of a noisy approximation.
Notice that $Q$
is a product of {\em three} nonnegative matrices, the first of which
is $p$-separable and the last is the transpose of the first. NMF at
first sight seems too weak to help find such factorizations.
However, if we think of $Q$ as a product of
{\em two} nonnegative matrices, $A$ and $R(\mathcal{T})A^T$, then our NMF
algorithm \cite{AGKM} can at least identify the
anchor words of $A$. As noted above, these 
suffice to  recover $R(\mathcal{T})A^T$, and then (using the anchor
words of $A$ again) all of $A$ as well.
See Section~\ref{subsec:idealanchor}  for details.

Of course, we are not given $Q$ but merely a good approximation to
it.  Now our NMF algorithm allows us to recover ``almost anchor''
words of $A$, and the crux of the proof is Section~\ref{subsec:almostanchor}
showing that these suffice to recover provably good estimates
to $A$ and $WW^T$. This uses (mostly)  bounds from matrix
perturbation theory, and interrelationships of condition numbers
mentioned in Section~\ref{sec:cond}.




For simplicity we assume the following condition on the topic model,  which we will see in
Section~\ref{subsec:freqwlog} can be assumed without loss of generality:

\hspace*{0.25in} (*) {\em  
The number of words, $n$, is at most $4ar/\epsilon$. 
}

\noindent Please see Algorithm~\ref{alg:main}: Main Algorithm for description of the
algorithm. Note that $R$ is our shorthand for $\frac{1}{m}WW^T$, which
as noted converges to
$R(\mathcal{T})$ as the number of documents increases.


\begin{fragment*}[t]
\caption{
\label{alg:main}{\sc Main Algorithm}, \textbf{Output:} $R$ and $A$\vspace*{0.01in}
}

\begin{enumerate} \itemsep 0pt
\small 
\item Query the oracle for $m$ documents, where $$m = \max\left\{O\left(\frac{\log n \cdot a^4r^6}{\epsilon^2p^6\gamma^2 N}\right), O\left(\frac{\log r \cdot a^2 r^4}{\gamma^2}\right) , O\left(\frac{\log r\cdot r^2}{\epsilon^2}\right)\right\}$$
\item Split the words of each document into two halves, and let $\tilde{M}$, $\tilde{M}'$ be the term-by-document matrix with first and second half of words respectively.
\item Compute word-by-word matrix $Q = \frac{4}{N^2m}\tilde{M} \tilde{M}'^T$
\item Apply the ``Robust NMF" algorithm of
  Theorem~\ref{thm:betterseparablenoise}  to $Q$ which returns $r$ words that
  are "almost" the anchor words of $A$. 
\item Use these $r$ words as input to {\sc Recover with Almost Anchor Words} to compute $R = \frac{1}{m}WW^T$ and $A$
\end{enumerate} 

\end{fragment*}

\vspace*{-0.1in}
\subsection{Recover $R$ and $A$ with Anchor Words}
\label{subsec:idealanchor}
We first describe how the recovery procedure works in an ``idealized'' setting (Algorthm~\ref{alg:recoveranchor},{\sc Recover with True Anchor Words}), when we are given the exact value of $ARA^T$ and a set of anchor words -- one for each topic. We can permute the rows of $A$ so that the anchor words are exactly the first $r$ words. Therefore $A^T = (D, U^T)$ where $D$ is a diagonal matrix. Note that $D$ is not necessarily the identity matrix (nor even a scaled copy of the identity matrix), but we do know that the diagonal entries are at least $p$. We apply the same permutation to the rows and columns of $Q$. As shown in Figure~\ref{fig:recover}, if we look at the submatrix formed by the first $r$ rows and $r$ columns, it is exactly $DRD$. Similarly, the submatrix consisting of the first $r$ rows is exactly $DRA^T$. We can use these two matrices to compute $R$ and $A$, in this idealized setting (and we will use the same basic strategy in the general case, but need only be more careful about how we analyze how errors compound in our algorithm).

\begin{fragment*}[t]
\caption{\label{alg:recoveranchor}
{\sc Recover with True Anchor Words} \newline \textbf{Input:} $r$ anchor words, \textbf{Output:} $R$ and $A$}

\begin{enumerate} \itemsep 0pt
\small 
\item Permute the rows and columns of $Q$ so that the anchor words appear in the first $r$ rows and columns
\item Compute $DRA^T \vec{1}$ (which is equal to $DR\vec{1} $)
\item Solve for $\vec{z}$: $DRD\vec{z} = DR\vec{1}$.
\item Output $A^T = ((DRD\mbox{Diag}(z))^{-1}DRA^T)$.
\item Output $R = (\mbox{Diag}(z)DRD\mbox{Diag}(z))$.
\end{enumerate}
\end{fragment*}

Our algorithm has exact knowledge of the matrices $DRD$ and $DRA^T$, and so the main task is to recover the diagonal matrix $D$. Given $D$, we can then compute $A$ and $R$ (for the Dirichlet Allocation we can also compute its parameters - i.e. the $\vec{\alpha}$ so that $R(\alpha) = R$). The key idea to this algorithm is that the row sums of $DR$ and $DRA^T$ are the same, and we can use the row sums of $DR$ to set up a system of linear constraints on the diagonal entries of $D^{-1}$. 

\begin{lemma}\label{lem:recoverrealanchor}
When the matrix $Q$ is exactly equal to $ARA^T$ and we know the set of anchor words, {\sc Recover with True Anchor Words} outputs $A$ and $R$ correctly.
\end{lemma}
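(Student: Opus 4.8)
The plan is to exploit the block structure that $p$-separability imposes on $Q=ARA^T$. After the permutation in Step~1, $p$-separability (Definition~\ref{def:psep}) says that the $r$ anchor rows of $A$ really do form a diagonal matrix, so we may write $A^T=(D,U^T)$ with $D$ an $r\times r$ diagonal matrix all of whose diagonal entries are at least $p$; in particular $D$ is invertible. Then
\[
Q \;=\; ARA^T \;=\; \begin{pmatrix} D\\ U\end{pmatrix} R\,(D,\ U^T) \;=\; \begin{pmatrix} DRD & DRU^T\\ URD & URU^T\end{pmatrix},
\]
so the top-left $r\times r$ block of the permuted $Q$ is exactly $DRD$, and the first $r$ rows of the permuted $Q$ are exactly $DR(D,U^T)=DRA^T$. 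These are precisely the two matrices the algorithm manipulates, and the whole task reduces to recovering the diagonal matrix $D$ --- equivalently $\mathrm{Diag}(z)=D^{-1}$ --- since given $D$, Steps~4--5 are pure algebra.

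First I would record two facts. (a) Since each column of $A$ is a probability distribution on $[n]$, we have $\vec{1}_n^T A=\vec{1}_r^T$, i.e. $A^T\vec{1}_n=\vec{1}_r$; hence $DRA^T\vec{1}_n=DR\vec{1}_r$, which justifies the parenthetical claim in Step~2. (b) $R$ is invertible, because its $\ell_1$ condition number $\gamma$ is positive (and in the sampled regime at least $\gamma/2>0$ by Lemma~\ref{lem:convergeR}), so $\sigma_{\min}(R)>0$. Combined with invertibility of $D$, the matrix $DRD$ is invertible, so the linear system $DRD\,\vec{z}=DR\vec{1}$ in Step~3 has a unique solution, namely
\[
\vec{z} \;=\; (DRD)^{-1}DR\vec{1} \;=\; D^{-1}R^{-1}D^{-1}\,DR\vec{1} \;=\; D^{-1}\vec{1},
\]
so $\mathrm{Diag}(z)=D^{-1}$, exactly as needed.

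Finally I would substitute. In Step~4, $DRD\,\mathrm{Diag}(z)=DRD\,D^{-1}=DR$, hence $\bigl(DRD\,\mathrm{Diag}(z)\bigr)^{-1}DRA^T=(DR)^{-1}DRA^T=A^T$; and in Step~5, $\mathrm{Diag}(z)\,DRD\,\mathrm{Diag}(z)=D^{-1}(DRD)D^{-1}=R$. This proves the lemma. There is no genuinely hard step here: the only points requiring care are (i) checking that $p$-separability yields an honest diagonal $D$ rather than merely a matrix with large diagonal, which is immediate from Definition~\ref{def:psep}, and (ii) the invertibility of $R$ so that the system in Step~3 has a \emph{unique} solution. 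The reason this deserves its own lemma is that it is the template for the noisy setting of Section~\ref{subsec:almostanchor}: the same chain of identities must be carried out while tracking how the perturbation in $Q$, the perturbation in the recovered almost-anchor rows, and the resulting perturbation in $\vec{z}$ compound through Steps~3--5.
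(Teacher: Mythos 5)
Your proposal is correct and follows essentially the same route as the paper's proof: read off $DRD$ and $DRA^T$ from the permuted $Q$, use $A^T\vec{1}=\vec{1}$ to justify Step~2, use invertibility of $R$ (via Lemma~\ref{lem:convergeR}) and of $D$ to conclude $\vec{z}=D^{-1}\vec{1}$, and then substitute into Steps~4--5. No substantive differences to report.
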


\begin{proof}
The Lemma is straight forward from Figure~\ref{fig:recover} and the procedure. By Figure~\ref{fig:recover} we can find the exact value of $DRA^T$ and $DRD$ in the matrix $Q$. Step 2 of recover computes $DR\vec{1}$ by computing $DRA^T\vec{1}$. The two vectors are equal because $A$ is the topic-term matrix and its columns sum up to 1, in particular $A^T\vec{1} = \vec{1}$.

In Step 3, since $R$ is invertible by Lemma~\ref{lem:convergeR}, $D$ is a diagonal matrix with entries at least $p$, the matrix $DRD$ is also invertible. Therefore there is a unique solution $\vec{z} = (DRD)^{-1}DR\vec{1} = D^{-1}\vec{1}$. Also $D \vec{z} = \vec{1}$ and hence $D \mbox{Diag}(z) = I$. 
Finally, using the fact that $D\mbox{Diag}(z) = I$, the output in step 4 is just $(DR)^{-1}DRA^T = A^T$, and the output in step 5 is equal to $R$.
\end{proof}

\begin{figure}
  \center
  \includegraphics[width = 6in]{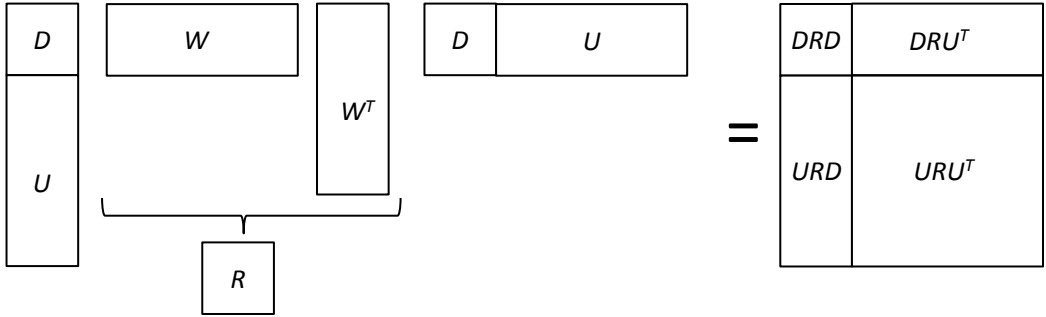}\\
\caption{The matrix $Q$}
  \label{fig:recover}
\end{figure}

\vspace*{-0.2in}
\subsection{Recover $R$ and $A$ with Almost Anchor Words}\label{subsec:almostanchor}

What if we are not given the exact anchor words, but are given words that are ``close'' to anchor words? As we noted, in general we cannot hope to recover the true anchor words, but even a good approximation will be enough to recover $R$ and $A$. 

When we restrict $A$ to the rows corresponding to ``almost'' anchor words, the submatrix will not be diagonal. However, it will be close to a diagonal in the sense that the submatrix will be a diagonal matrix $D$ multiplied by $E$, and $E$ is close to the identity matrix (and the diagonal entries of $D$ are at least $\Omega(p)$). Here we analyze the same procedure as above and show that it still recovers $A$ and $R$ (approximately) even when given ``almost'' anchor words instead of true anchor words.  For clarity we state the procedure again in Algorithm~\ref{alg:recoveralmostanchor}: {\sc Recover with Almost Anchor Words}. The guarantees at each step are different than before, but the implementation of the procedure is the same. Notice that here we permute the rows of $A$ (and hence the rows and columns of $Q$) so that the ``almost'' anchor words returned by Theorem~\ref{thm:separablenoise} appear first and the submatrix $A$ on these rows is equal to $DE$.

Here, we still assume that the matrix $Q$ is exactly equal to $ARA^T$ and hence the first $r$ rows of $Q$ form the submatrix $DERA^T$ and the first $r$ rows and columns are $DERE^TD$. The complication here is that $\mbox{Diag}(z)$ is not necessarily equal to $D^{-1}$, since the matrix $E$ is not necessarily the identity. However, we can show that $\mbox{Diag}(z)$ is "close" to $D^{-1}$ if $E$ is suitably close to the identity matrix -- i.e. given good enough proxies for the anchor words, we can bound the error of the above recovery procedure. We write $E = I+Z$. Intuitively when $Z$ has only small entries $E$ should behave like the identity matrix. In particular, $E^{-1}$ should have only small off-diagonal entries. We make this precise through the following lemmas:

\begin{lemma} \label{lem:inverseE1}
Let $E = I+Z$ and $\sum_{i, j} |Z_{i,j}| = \epsilon < 1/2$, then $E^{-1} \vec{1}$ is a vector with entries in the range $[1-2\epsilon, 1+2\epsilon]$.
\end{lemma}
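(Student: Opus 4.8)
The plan is to expand $E^{-1}\vec{1}$ as a Neumann series and control it entrywise. First I would observe that since $\sum_{i,j}|Z_{i,j}| = \epsilon < 1/2$, the operator norm of $Z$ acting on $\ell_\infty$ (i.e. the maximum absolute row sum) is at most $\epsilon < 1$, so $I+Z$ is invertible and $E^{-1} = \sum_{k\ge 0} (-Z)^k$ with the series converging absolutely. Applying this to the all-ones vector gives $E^{-1}\vec{1} = \vec{1} + \sum_{k\ge 1}(-Z)^k \vec{1}$.

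Next I would bound the tail. The $i$-th coordinate of $Z^k\vec{1}$ is a sum of products $Z_{i,i_1}Z_{i_1,i_2}\cdots Z_{i_{k-1},i_k}$ over all index sequences, and bounding each factor after the first by its absolute value and summing over the free indices repeatedly, one gets $\|Z^k \vec{1}\|_\infty \le \epsilon^k$ for every $k \ge 1$ (each application of $Z$ multiplies the $\ell_\infty$ bound by the maximum row sum $\le \epsilon$; here even the first application is bounded by $\epsilon$ since $\|\vec{1}\|_\infty = 1$). Hence
\[
\left\| E^{-1}\vec{1} - \vec{1} \right\|_\infty \;\le\; \sum_{k\ge 1} \epsilon^k \;=\; \frac{\epsilon}{1-\epsilon} \;\le\; 2\epsilon,
\]
using $\epsilon < 1/2$ in the last step. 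This says every entry of $E^{-1}\vec{1}$ lies in $[1-2\epsilon,\,1+2\epsilon]$, which is exactly the claim.

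I do not anticipate a genuine obstacle here; the only point requiring a little care is the combinatorial bound $\|Z^k\vec{1}\|_\infty \le \epsilon^k$, i.e. making sure the entrywise $\ell_1$ hypothesis $\sum_{i,j}|Z_{i,j}| = \epsilon$ (which a fortiori bounds every individual row sum by $\epsilon$) is enough to drive the geometric decay — it is, and in fact the per-row bound alone suffices. An alternative, essentially equivalent route is to write $E^{-1}\vec{1} = \vec{v}$ where $(I+Z)\vec{v} = \vec{1}$, so $\vec{v} = \vec{1} - Z\vec{v}$, and then note $\|\vec{v}\|_\infty \le 1 + \epsilon\|\vec{v}\|_\infty$ gives $\|\vec{v}\|_\infty \le 1/(1-\epsilon) \le 2$, whence $\|\vec{v}-\vec{1}\|_\infty = \|Z\vec{v}\|_\infty \le \epsilon\|\vec{v}\|_\infty \le 2\epsilon$; this fixed-point phrasing avoids the series manipulation entirely and is probably the cleanest writeup.
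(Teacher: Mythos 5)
Your proposal is correct, and it is essentially the paper's own argument: the fixed-point phrasing you give at the end ($\vec{v}=\vec{1}-Z\vec{v}$, so $\|\vec{v}\|_\infty\le 1/(1-\epsilon)\le 2$ and hence $\|\vec{v}-\vec{1}\|_\infty=\|Z\vec{v}\|_\infty\le\epsilon\|\vec{v}\|_\infty\le 2\epsilon$) is exactly how the paper proves the lemma, by bounding the maximal-magnitude entry of $\vec{b}=E^{-1}\vec{1}$. Your primary Neumann-series route is just a repackaging of the same $\ell_\infty$-contraction fact ($\|Z\|_\infty\le\epsilon<1/2$, so $\sum_{k\ge1}\epsilon^k\le 2\epsilon$) and is equally valid.
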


\begin{proof}
$E$ is clearly invertible because the spectral norm of $Z$ is at most $1/2$.
Let $\vec{b} = E^{-1}\vec{1}$. Since $E = I+Z$ we multiply $E$ on both sides to get $\vec{b}+Z\vec{b} = \vec{1}$. Let $b_{max}$ be the largest absolute value of any entry of $b$ ($b_{max} = \max |b_i|$). Consider the entry $i$ where $b_{max}$ is achieved, we know $b_{max} = |b_i| \le 1 + |(Zb)_i| \le 1 + \sum_{j} |Z_{i,j}||b_j| \le 1+\epsilon b_{max}.$
Thus $b_{max}\le 1/(1-\epsilon) \le 2$. Now all the entries in $Z\vec{b}$ are within $2\epsilon$ in absolute value, and we know that $\vec{b} = \vec{1}+Z\vec{b}$. Hence all the entries of $b$ are in the range $[1-2\epsilon, 1+2\epsilon]$, as desired. 
\end{proof}

\begin{lemma} \label{lem:inverseE2}
Let $E = I+Z$ and $\sum_{i, j} |Z_{i,j}| = \epsilon < 1/2$, then the columns of $E^{-1} - I$ have $\ell_1$ norm at most $2\epsilon$.
\end{lemma}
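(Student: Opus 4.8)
The plan is to phrase everything in terms of the induced operator norm $\|M\|_{1\to 1} := \max_{j}\sum_{i} |M_{ij}|$, i.e. the largest $\ell_1$-norm of a column of $M$ --- which is exactly the quantity the lemma asks us to bound for $M = E^{-1} - I$. Two standard facts drive the argument: this norm is submultiplicative, $\|MN\|_{1\to 1} \le \|M\|_{1\to 1}\,\|N\|_{1\to 1}$, and the hypothesis immediately gives $\|Z\|_{1\to 1} = \max_j \sum_i |Z_{ij}| \le \sum_{i,j} |Z_{ij}| = \epsilon < 1/2$.

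First I would note that $E = I + Z$ is invertible (already observed in the proof of Lemma~\ref{lem:inverseE1}, and in any case clear since $\|Z\|_{1\to 1} < 1$), and that the Neumann series $E^{-1} = \sum_{k\ge 0} (-Z)^k$ converges in this norm. Subtracting the $k=0$ term gives $E^{-1} - I = \sum_{k\ge 1} (-Z)^k$, so by the triangle inequality and submultiplicativity $\|E^{-1} - I\|_{1\to 1} \le \sum_{k\ge 1} \|Z\|_{1\to 1}^k \le \sum_{k\ge 1} \epsilon^k = \epsilon/(1-\epsilon) \le 2\epsilon$, where the last inequality uses $\epsilon < 1/2$. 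Since $\|E^{-1}-I\|_{1\to 1}$ is by definition the maximum over columns of their $\ell_1$ norm, every column of $E^{-1} - I$ has $\ell_1$ norm at most $2\epsilon$, as claimed.

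If one prefers to avoid infinite series (mirroring the fixed-point style of Lemma~\ref{lem:inverseE1}), I would instead start from the identity $E^{-1} - I = -Z E^{-1}$, so the $j$-th column of $E^{-1}-I$ is $-Z$ applied to the $j$-th column of $E^{-1} = I + (E^{-1}-I)$. Writing $c = \max_j \|(E^{-1}-I)_{\cdot j}\|_1$ and using $\|Zv\|_1 \le \|Z\|_{1\to 1}\|v\|_1 \le \epsilon\|v\|_1$, this yields $\|(E^{-1}-I)_{\cdot j}\|_1 \le \epsilon(1+c)$ for every $j$, hence $c \le \epsilon(1+c)$ and therefore $c \le \epsilon/(1-\epsilon) \le 2\epsilon$.

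I do not expect any genuine obstacle: the statement is a routine consequence of the submultiplicativity of the column-sum norm together with $\epsilon < 1/2$. The only points needing a line of care are identifying "$\ell_1$ norm of a column" with the operator norm $\|\cdot\|_{1\to 1}$, and extracting $\|Z\|_{1\to 1} \le \epsilon$ from the given entrywise sum bound.
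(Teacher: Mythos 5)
Your proposal is correct and is essentially the paper's argument: the paper proves the lemma column by column via exactly your fixed-point identity (writing $\vec{b}=-Z(\vec{b}+\vec{e_1})$ and self-bounding, with an intermediate bound on $\max_i|b_i|$ before passing to the $\ell_1$ norm), which your operator-norm formulation $\|E^{-1}-I\|_{1\to 1}\le \epsilon(1+\|E^{-1}-I\|_{1\to 1})$ merely streamlines. The Neumann-series version is just the unrolled form of the same estimate, so there is no substantive difference in route.
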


\begin{proof}
Without loss of generality, we can consider just the first column of $E^{-1}-I$, which is equal to $(E^{-1}-I)\vec{e_1}$, where $\vec{e_1}$ is the indicator vector that is one on the first coordinate and zero elsewhere.

The approach is similar to that in Lemma~\ref{lem:inverseE1}. Let $\vec{b} =  (E^{-1}-I)\vec{e_1}$. Left multiply by $E = (I + Z)$ and we obtain $\vec{b} + Z\vec{b}=  - Z \vec{e_1}$. Hence $\vec{b} = - Z (\vec{b} +  \vec{e_1}) $.  Let $b_{max}$ be the largest absolute value of entries of $\vec{b}$ ($b_{max} = \max |b_i|$). Let $i$ be the entry in which $b_{max}$ is achieved. Then
$$b_{max} = |b_i| \le |(Z\vec{b} )_i| + |(Z\vec{e_1})_i | \le  \epsilon b_{max} + \epsilon$$
Therefore $b_{max} \le \epsilon/(1-\epsilon) \le 2\epsilon$. Further, the $\| \vec{b}\|_1 \le \|Z\vec{e_1}\|_1+\|Z\vec{b}\|_1\le \epsilon+2\epsilon^2 \le 2\epsilon$. 
\end{proof}

Now we are ready to show that the procedure {\sc Recover with Almost Anchor Words} succeeds when given "almost" anchor words:

\begin{fragment*}[t]
\caption{\label{alg:recoveralmostanchor}
{\sc Recover with Almost Anchor Words} \newline \textbf{Input:} $r$ "almost" anchor words, \textbf{Output:} $R$ and $A$}
\begin{enumerate} \itemsep 0pt
\small 
\item Permute the rows and columns of $Q$ so that the "almost" anchor words appear in the first $r$ rows and columns.
\item Compute $DERA^T \vec{1}$ (which is equal to $DER\vec{1}$)
\item Solve for $\vec{z}$: $DERE^TD\vec{z} = DER\vec{1}$.
\item Output $A^T = ((DERE^TD\mbox{Diag}(z))^{-1}DERA^T)$.
\item Output $R = (\mbox{Diag}(z)DERE^TD\mbox{Diag}(z))$.
\end{enumerate}

\end{fragment*}

\begin{lemma}
\label{lem:recoveralmostanchor}
When the matrix $Q$ is exactly equal to $ARA^T$, the matrix $A$ restricted to almost anchor words is $DE$ where $E-I$ has $\ell_1$ norm $\epsilon<1/10$ when viewed as a vector, procedure {\sc Recover with Almost Anchor Words} outputs $A$ such that each column of $A$ has $\ell_1$ error at most $6\epsilon$. The matrix $R$ has additive error $Z_R$ whose $\ell_1$ norm when viewed as a vector is at most $8\epsilon$.
\end{lemma}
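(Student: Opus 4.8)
The plan is to track how the errors from replacing the true anchor words by almost anchor words (equivalently, replacing the diagonal block $DRD$ by $DERE^TD$ and the first-$r$-rows block $DRA^T$ by $DERA^T$) propagate through Steps 2–5 of {\sc Recover with Almost Anchor Words}. Write $E = I + Z$ with $\sum_{i,j}|Z_{i,j}| = \epsilon < 1/10$ (note each \emph{row} of $Z$ has $\ell_1$ norm $O(\epsilon/\gamma)$ by Theorem~\ref{thm:betterseparablenoise}, and summing over $r$ rows is harmless since $Z$ viewed as a vector has $\ell_1$ norm $\epsilon$). First I would record that $DERE^TD = D\cdot E R E^T\cdot D$, so after conjugating by the (unknown but harmless) positive diagonal $D$, everything reduces to comparing $ERE^T$ with $R$, and $ERA^T$ with $RA^T$. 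The key structural facts are Lemmas~\ref{lem:inverseE1} and \ref{lem:inverseE2}: $E^{-1}\vec 1$ has all entries in $[1-2\epsilon, 1+2\epsilon]$, and the columns of $E^{-1}-I$ have $\ell_1$ norm at most $2\epsilon$ (so $E^{-1}$ acts almost like the identity in $\ell_1$).

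Next I would analyze Step 3, the linear solve $DERE^TD\,\vec z = DER\vec 1$. In the idealized case this gave $\vec z = D^{-1}\vec 1$; here the solution is $\vec z = (DERE^TD)^{-1}DER\vec 1 = D^{-1}(E^T)^{-1}R^{-1}E^{-1}\cdot ER\vec 1 = D^{-1}(E^T)^{-1}\vec 1$. So $D\,\mbox{Diag}(z) = \mbox{Diag}((E^T)^{-1}\vec 1)$, which by Lemma~\ref{lem:inverseE1} is a diagonal matrix with entries in $[1-2\epsilon,1+2\epsilon]$ — i.e.\ within $2\epsilon$ of the identity, entrywise. Call this $G := D\,\mbox{Diag}(z)$, so $G = I + (\text{diagonal of size} \le 2\epsilon)$. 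This is the crucial point: although $\mbox{Diag}(z)\ne D^{-1}$, their product with $D$ is a near-identity diagonal, and that is all the subsequent steps need.

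Then Step 4 outputs $A'^T = (DERE^TD\,\mbox{Diag}(z))^{-1}DERA^T = (DERE^TG)^{-1}DERA^T = G^{-1}(E^T)^{-1}R^{-1}E^{-1}D^{-1}\cdot DERA^T = G^{-1}(E^T)^{-1}R^{-1}E^{-1}ERA^T = G^{-1}(E^T)^{-1}A^T$. So the recovered $A'^T$ differs from the true $A^T$ by left-multiplication by $G^{-1}(E^T)^{-1}$. I would then bound, column by column of $A^T$ (equivalently row of $A$, but here it is cleaner to bound columns of $A'^T - A^T = (G^{-1}(E^T)^{-1} - I)A^T$): writing $G^{-1}(E^T)^{-1} - I = (G^{-1}-I)(E^T)^{-1} + ((E^T)^{-1}-I)$, the first term contributes at most $\frac{2\epsilon}{1-2\epsilon}\cdot\|(E^T)^{-1}\|_{1\to 1} \le \frac{2\epsilon}{1-2\epsilon}(1+2\epsilon)$ and the second at most $2\epsilon$ (using Lemma~\ref{lem:inverseE2} for the induced $\ell_1\to\ell_1$ norm of $(E^T)^{-1}-I$, which equals the max column $\ell_1$ norm of $E^{-1}-I$), and each column of $A^T$ has unit $\ell_1$ norm; summing the pieces and using $\epsilon < 1/10$ gives the stated bound $6\epsilon$ per column of $A$. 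For Step 5, $R' = \mbox{Diag}(z)DERE^TD\,\mbox{Diag}(z) = G^T(ERE^T)G\cdot$ — more precisely $R' = \mbox{Diag}(z)D\cdot ERE^T\cdot D\mbox{Diag}(z) = G^T ERE^T G$ — wait, $D$ and $\mbox{Diag}(z)$ commute, so $R' = G\,ERE^T G$ with $G$ diagonal. Then $R' - R = G E R E^T G - R$; I would expand $GE = (I + (G-I))(I+Z) = I + (G-I) + Z + (G-I)Z$ and bound $\|GE - I\|$ as a vector by $2\epsilon + \epsilon + 2\epsilon^2 \le 3\epsilon + 2\epsilon^2$, then use $R' - R = (GE)R(GE)^T - R = (GE - I)R(GE)^T + R((GE)^T - I)$ together with the fact that $R$ has unit row sums and $\ell_1$-bounded structure to get that $R' - R$ as a vector has $\ell_1$ norm at most $8\epsilon$ after collecting the (at most two, each $\le 3\epsilon + O(\epsilon^2)$) contributions and using $\epsilon<1/10$.

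The main obstacle I anticipate is bookkeeping the norms correctly across the non-commuting matrix products — in particular being careful that $\|(E^T)^{-1} - I\|_{1\to 1}$ is controlled by the \emph{column} $\ell_1$ norms of $E^{-1}-I$ (Lemma~\ref{lem:inverseE2}), not the row norms, and that when we bound columns of $A'^T - A^T$ we want the $\ell_1\to\ell_1$ operator norm of the left-multiplier (= max column sum), so these must match up. A secondary subtlety is confirming that $R'$ inherits enough of the ``rows sum to 1, entries controlled'' structure of $R$ that multiplying by the near-identity diagonal $G$ on both sides changes its $\ell_1$-as-a-vector norm by only $O(\epsilon)$ rather than $O(r\epsilon)$; this should follow because $G - I$ is diagonal with entries $\le 2\epsilon$, so $GRG - R$ has each entry scaled by a factor within $[(1-2\epsilon)^2, (1+2\epsilon)^2]$ and the total $\ell_1$ mass of $R$ across all entries is exactly $r$ (unit row sums, $r$ rows) — hmm, that would give $O(r\epsilon)$, so instead I would argue via the two-term splitting $GRG - R = (G-I)RG + R(G-I)$ and bound each using that $R$ has row sums $1$ hence $\|RG\|$ and $\|R\|$ have controlled row $\ell_1$ norms, yielding $O(\epsilon)$ per row and thus — no. The honest resolution, which I would flag as the delicate step, is that the error in $R$ is measured "as a vector" meaning summed over all $r^2$ entries, so I would instead bound $\|GRG - R\|_1 \le \|(G-I)RG\|_1 + \|R(G-I)\|_1$ where $\|R(G-I)\|_1 = \sum_{i,j} R_{ij}|G_{jj}-1| \le 2\epsilon \sum_{i,j} R_{ij} = 2\epsilon r$ — which is $O(r\epsilon)$, contradicting the claimed $8\epsilon$. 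So either the lemma intends $\ell_1$ norm per column/row (most likely: "$\ell_1$ norm when viewed as a vector" may actually be applied after the overall analysis absorbs an $r$ into the sample complexity, or $\epsilon$ here is already the per-entry scale), or $GE - I$ is genuinely small because $E$ is close to $I$ \emph{and} $G$ is essentially $E^{-1}$'s diagonal making $GE$ closer to $I$ than either factor — this cancellation is the real content, and nailing it is where I would spend the most care.
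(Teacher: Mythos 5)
Your analysis of Steps 3--4 follows essentially the paper's route: $\vec{z} = D^{-1}(E^T)^{-1}\vec{1}$, so $D\,\mbox{Diag}(z) = \mbox{Diag}((E^T)^{-1}\vec{1})$ is within $2\epsilon$ of the identity by Lemma~\ref{lem:inverseE1}, and the output of Step 4 is $(D\,\mbox{Diag}(z))^{-1}(E^T)^{-1}A^T$, whose deviation from $A^T$ is controlled via Lemma~\ref{lem:inverseE2}. However, you flipped the orientation at the last step: the lemma is about columns of $A$, i.e.\ rows of $A^T$, and these are exactly the vectors of unit $\ell_1$ norm; so the correct move (and the paper's) is to bound the $\ell_1$ norm of each \emph{row} of $(D\,\mbox{Diag}(z))^{-1}(E^T)^{-1} - I$ by $5\epsilon$ and then use $\|v^T A^T\|_1 \le \|v\|_1$. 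Your version bounds columns of $A'^T-A^T$ via the $\ell_1\to\ell_1$ operator norm and asserts that ``each column of $A^T$ has unit $\ell_1$ norm,'' which is false (columns of $A^T$ are rows of $A$), and in any case columns of $A'^T-A^T$ are not the quantity the lemma claims. This is repairable by transposing your decomposition -- the hypothesis is a bound on the total vector $\ell_1$ norm of $Z$, which is symmetric, so Lemma~\ref{lem:inverseE2} applies equally to $E^T$ -- but as written the step does not prove the stated conclusion.

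The genuine gap is the bound on $R$, which you explicitly leave unresolved: you compute $\|R(G-I)\|_1 \le 2\epsilon\sum_{i,j}R_{ij} = 2\epsilon r$ and conclude that the claimed $8\epsilon$ must rest on a hidden cancellation between $G$ and $E$. There is no such cancellation; the missing fact is the normalization of $R$. Since $R = \frac{1}{m}WW^T$ and every column of $W$ is a probability vector, $\sum_{i,j}R_{ij} = \frac{1}{m}\vec{1}^T W W^T \vec{1} = 1$ (equivalently $\sum_{i,j} R(\mathcal{T})_{ij} = \E[(\sum_i X_i)^2] = 1$); $R$ does \emph{not} have unit row sums, its entries sum to $1$ in total, so its row and column sums are each at most $1$. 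With this, the paper's bookkeeping goes through directly: write $\mbox{Diag}(z)D = I+Z_1$ (diagonal, entries of absolute value at most $2\epsilon$) and $E = I+Z_2$ (vector $\ell_1$ norm at most $\epsilon$); the leading error $Z_1R + Z_2R + RZ_1 + RZ_2$ then has vector $\ell_1$ norm at most $2\epsilon+\epsilon+2\epsilon+\epsilon = 6\epsilon$, and the higher-order terms contribute at most another $2\epsilon$, giving the stated $8\epsilon$. Without this observation your proposal does not establish the second half of the lemma.
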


\begin{proof}
Since $Q$ is exactly $ARA^T$, our algorithm is given $DERA^T$ and $DERE^TD$ with no error. In Step 3, since $D$, $E$ and $R$ are all invertible, we have $$\vec{z} = (DERE^TD)^{-1} DER \vec{1} = D^{-1}(E^T)^{-1} \vec{1}$$
 Ideally we would want $\mbox{Diag}(z) = D^{-1}$, and indeed $D\mbox{Diag}(z) = \mbox{Diag}((E^T)^{-1} \vec{1})$. From Lemma~\ref{lem:inverseE1}, the vector $(E^T)^{-1} \vec{1}$ has entries in the range $[1-2\epsilon, 1+2\epsilon]$, thus each entry of $\mbox{Diag}(z)$ is within a $(1 \pm 2 \epsilon)$ multiplicative factor from the corresponding entry in $D^{-1}$.

Consider the output in Step 4. Since $D$, $E$, $R$ are invertible, the first output is $$(DERE^TD\mbox{Diag(z)})^{-1}DERA^T = (D\mbox{Diag}(z))^{-1} (E^T)^{-1} A^T$$ 
Our goal is to bound the $\ell_1$ error of the columns of the output compared to the corresponding columns of $A$. Notice that it is sufficient to show that the $j^{th}$ row of $(D\mbox{Diag}(z))^{-1} (E^T)^{-1}$ is close (in $\ell_1$ distance) to the indicator vector $\vec{e_j}^T$. 

\begin{claim}
For each $j$, $ \| \vec{e_j}^T (D\mbox{Diag}(z))^{-1} (E^T)^{-1} - \vec{e_j}^T \|_1 \leq 5 \epsilon$
\end{claim}

\begin{proof}
Again, without loss of generality we can consider just the first row. From Lemma~\ref{lem:inverseE2} $\vec{e_1}^T (E^T)^{-1}$ has $\ell_1$ distance at most $2\epsilon$ to $\vec{e_1}^T$. $(D\mbox{Diag}(z))^{-1}$ has entries in the range $[1-3\epsilon, 1+3\epsilon]$. And so $$ \|\vec{e_1}^T (D\mbox{Diag}(z))^{-1} (E^T)^{-1}  - \vec{e_1}^T \|_1 \leq \|  \vec{e_1}^T (D\mbox{Diag}(z))^{-1} (E^T)^{-1} - \vec{e_1}^T  (E^T)^{-1}  \|_1 + \|\vec{e_1}^T  (E^T)^{-1}  - \vec{e_1}^T \|_1$$
The last term can be  bounded by $2 \epsilon$. Consider the first term on the right hand side: The vector $ \vec{e_1}^T (D\mbox{Diag}(z))^{-1}  - \vec{e_1}^T$ has one non-zero entry (the first one) whose absolute value is at most $3 \epsilon$. Hence, from Lemma~\ref{lem:inverseE2} the first term can be bounded by $6 \epsilon^2 \leq 3 \epsilon$, and this implies the claim. 
\end{proof}

The first row of $(D\mbox{Diag}(z))^{-1} (E^T)^{-1} A^T$ is $A_1 + z^T A$ where $z$ is a vector with $\ell_1$ norm at most $5\epsilon$. So every column of $A$ is recovered with $\ell_1$ error at most $6\epsilon$. 

Consider the second output of the algorithm. The output is $\mbox{Diag}(z)DE R E^T D\mbox{Diag}(z)$ and we can write $\mbox{Diag}(z)D = I+Z_1$ and $E = I+Z_2$.  The leading error are $Z_1R+Z_2R+RZ_1+RZ_2$ and hence the $\ell_1$ norm of the leading error term (when treated as a vector) is at most $6\epsilon$ and other terms are of order $\epsilon^2$ and can safely be bounded by $2\epsilon$ for suitably small $\epsilon$).
\end{proof}

Finally we consider the general case (in which there is additive noise in Step 1): we are not given $ARA^T$ exactly. We are given $Q$ which is close to $ARA^T$ (by Lemma~\ref{lem:convergeQ}). We will bound the accumulation of this last type of error. Suppose in Step $1$ of RECOVER we obtain $DERA^T+U$ and $DERE^TD+V$ and furthermore the entries of $U$ and $U\vec{1}$ have absolute value at most $\epsilon_1$ and the matrix $V$ has $\ell_1$ norm $\epsilon_2$ when viewed as a vector.

\begin{lemma}
\label{lem:recoverwitherror}
 If $\epsilon, \epsilon_1, \epsilon_2$ are sufficiently small, RECOVER outputs $A$ such that each entry of $A$ has additive error at most $O(\epsilon+(ra\epsilon_2/p^3 +\epsilon_1 r /p^2)/\gamma)$. Also the matrix $R$ has additive error $Z_R$ whose $\ell_1$ norm when viewed as a vector is at most $O(\epsilon+(ra\epsilon_2/p^3 +\epsilon_1 r/p^2)/\gamma)$.
\end{lemma}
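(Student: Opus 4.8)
The plan is to treat Lemma~\ref{lem:recoverwitherror} as a matrix‑perturbation ``bolt‑on'' to Lemma~\ref{lem:recoveralmostanchor}: the $O(\epsilon)$ term is inherited verbatim from the almost‑anchor‑words analysis (which already settled the case $Q=ARA^T$ exactly), and the remaining terms come from propagating the new errors $U$ (in $DERA^T$) and $V$ (in $DERE^TD$) through Steps 2--5 of RECOVER. Write $M=DERE^TD$ for the matrix inverted in Step~3, and $\vec z_0 = D^{-1}(E^T)^{-1}\vec 1$ for the noiseless solution identified in Lemma~\ref{lem:recoveralmostanchor} (whose entries lie in $[1-2\epsilon,(1+2\epsilon)/p]$, hence are $O(1/p)$). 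Throughout I would use: $D$ is diagonal with entries in $[p,1]$, so $\|D^{-1}\|\le 1/p$; $E=I+Z$ with $\epsilon:=\|Z\|_1<1/10$, so $E,E^T$ are invertible with inverses close to $I$ by Lemmas~\ref{lem:inverseE1} and~\ref{lem:inverseE2}; and $R=\frac1m WW^T$ is symmetric with $\ell_1$ condition number $\ge\gamma/2$ by Lemma~\ref{lem:convergeR}, which together with the $\Theta(1/(ar))$ lower bound on its (normalized) row sums coming from the topic‑imbalance bound gives $\|R^{-1}\| = O(ar/\gamma)$ in the relevant operator norm, and hence $\|M^{-1}\| = O(ar/(\gamma p^2))$. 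The ``$\epsilon,\epsilon_1,\epsilon_2$ sufficiently small'' hypothesis is used exactly to keep all perturbed matrices invertible (convergent Neumann series for $(M+V)^{-1}$ and for the analogous factors in Steps~4--5) and to discard second‑order cross terms such as $\epsilon\epsilon_2$, $\epsilon_1\epsilon_2$, $\epsilon_2^2$.

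First I would analyze Step~3. Step~2 produces $(DERA^T+U)\vec 1 = DER\vec 1 + U\vec 1$ since $A^T\vec 1=\vec 1$, with $\|U\vec 1\|_\infty\le\epsilon_1$; Step~3 then solves $(M+V)\vec z = M\vec z_0 + U\vec 1$. Subtracting $\vec z_0$ gives the clean identity
\[
\vec z - \vec z_0 \;=\; (M+V)^{-1}\big(U\vec 1 - V\vec z_0\big),
\]
and therefore, using $\|(M+V)^{-1}\| = O(\|M^{-1}\|)$ and $\|\vec z_0\|_\infty=O(1/p)$,
\[
\|\vec z - \vec z_0\| \;=\; O\!\Big(\|M^{-1}\|\cdot\big(\|U\vec 1\| + \|V\|\,\|\vec z_0\|_\infty\big)\Big) \;=\; O\!\Big(\tfrac{(r\,a\epsilon_2/p^{3} + \epsilon_1 r/p^{2})}{\gamma}\Big).
\]
Consequently $\mbox{Diag}(z)$ differs from its ideal value $\mbox{Diag}(z_0)$ by a diagonal matrix of the same magnitude; combined with the Lemma~\ref{lem:recoveralmostanchor} fact that $\mbox{Diag}(z_0)=D^{-1}\mbox{Diag}((E^T)^{-1}\vec 1)$ is already $(1\pm 2\epsilon)$‑close to $D^{-1}$ entrywise, this shows $D\,\mbox{Diag}(z)$ is $I$ plus an entrywise‑small perturbation. (The exact powers of $r$ here should be reconciled with the statement; in particular the $\|U\vec 1\|$ path should be charged in $\ell_\infty$ against $\|M^{-1}\|_{\infty\to\infty}$ rather than blown up to $\ell_1$, which is what keeps the $\epsilon_1$ term at $\epsilon_1 r/p^2$.)

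Next I would substitute into Steps~4 and~5 and expand each output as (ideal value) $+$ (error). For the topic matrix,
\[
\big((M+V)\mbox{Diag}(z)\big)^{-1}(DERA^T+U) \;=\; \big(D\,\mbox{Diag}(z_0)\big)^{-1}(E^T)^{-1}A^T \;+\; \text{(error)},
\]
where the error collects three contributions — from $V$ (bounded by $\|M^{-1}\|\,\|V\|$ times the trivial entrywise bound $\|A\|\le 1$), from $U$ (bounded by $\|M^{-1}\|\,\|U\|$), and from replacing $\mbox{Diag}(z_0)$ by $\mbox{Diag}(z)$ (bounded by $\|\mbox{Diag}(z)-\mbox{Diag}(z_0)\|$ times $\|M^{-1}\|\cdot\|(E^T)^{-1}\|$) — each of which is $O((r a\epsilon_2/p^3 + \epsilon_1 r/p^2)/\gamma)$; adding the $O(\epsilon)$ error from Lemma~\ref{lem:recoveralmostanchor} for the ideal term yields the claimed per‑entry bound on the recovered $A$. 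For $R$, the output $\mbox{Diag}(z)(M+V)\mbox{Diag}(z)$ equals the Lemma~\ref{lem:recoveralmostanchor} value, plus $\mbox{Diag}(z)V\mbox{Diag}(z)$ (of $\ell_1$ norm $\le\|\mbox{Diag}(z)\|_\infty^2\,\epsilon_2 = O(\epsilon_2/p^2)$), plus the cross terms from swapping $\mbox{Diag}(z_0)$ for $\mbox{Diag}(z)$ in $\mbox{Diag}(\cdot)M\mbox{Diag}(\cdot)$, which are linear in $\|\mbox{Diag}(z)-\mbox{Diag}(z_0)\|$ times $\|M\|\,\|\mbox{Diag}(z_0)\|$; summing with the inherited $O(\epsilon)$ gives the stated $\ell_1$ bound on $Z_R$.

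The main obstacle I anticipate is purely bookkeeping: making the powers of $p$, $r$, $a$, and $\gamma$ come out exactly as in the statement. The delicate points are (i) pinning down the correct operator norm of $M^{-1}$ — this goes through the $\ell_1$‑condition‑number/robustly‑simplicial dictionary of Claim~\ref{claim:conditionrelation} and the row‑renormalization step of Lemma~\ref{lem:convergeR}, and the $ar$ normalization factor on $R^{-1}$ must be tracked carefully; and (ii) being economical about the norm used at each step — entrywise/$\ell_\infty$ for $U$ and $U\vec 1$, $\ell_1$‑as‑a‑vector for $V$ and $Z_R$, column‑$\ell_1$ for $(E^T)^{-1}-I$ via Lemma~\ref{lem:inverseE2} — so that no spurious factor of $r$ or $n$ is introduced. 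I do not expect any genuinely new idea beyond Lemmas~\ref{lem:inverseE1}, \ref{lem:inverseE2}, and \ref{lem:recoveralmostanchor}; the argument is a disciplined triangle‑inequality expansion with Neumann‑series control of the three inverses appearing in Steps~3--5.
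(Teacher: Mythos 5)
Your proposal is correct in substance, but it takes a recognizably different route from the paper. The paper's proof hinges on a re-attribution trick: it writes the additive noise multiplicatively inside the factorization, $DERE^TD+V = DER(E^T+V')D$ with $V'=(DER)^{-1}V D^{-1}$ and $DERA^T+U = DER(A^T+U')$, bounds $\|V'\|$ and the entries of $U'$ by tracking the growth factors $1/p$ (from $D^{-1}$), $O(1)$ (from $E^{-1}$), and $ra/\Gamma(R)$ (from $R^{-1}$), and then simply re-runs the analysis of Lemma~\ref{lem:recoveralmostanchor} with $E^T$ and $A^T$ replaced by their perturbed versions; the noise is thus absorbed as a slightly worse ``almost-anchor quality.'' You instead keep $U$ and $V$ as additive errors and do a direct first-order/Neumann expansion: the identity $\vec z-\vec z_0=(M+V)^{-1}(U\vec 1 - V\vec z_0)$ for Step~3 and an ideal-plus-error decomposition of the Step~4/5 outputs, with everything charged against an operator-norm bound $\|M^{-1}\|=O(ar/(\gamma p^2))$. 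The two arguments rest on exactly the same ingredients (invertibility of $D,E,R$, Lemmas~\ref{lem:inverseE1}--\ref{lem:recoveralmostanchor}, the $ar/\gamma$ factor from the non-normalized $R$ via Lemma~\ref{lem:convergeR}), so neither is more powerful; the paper's version buys wholesale reuse of Lemma~\ref{lem:recoveralmostanchor}, while yours keeps the error sources separated at the cost of explicitly pinning down $\|M^{-1}\|$ and redoing the propagation through Steps~3--5. One caution: the bound you invoke is really a row-vector $\ell_1$ bound, i.e.\ $\max_i\|e_i^T M^{-1}\|_1 \le (1/p)\cdot O(1)\cdot (ar/\gamma)\cdot O(1)\cdot(1/p)$, which is precisely what controls $\|M^{-1}w\|_\infty$; done this way your $\epsilon_1$ term naturally comes out as $ar\epsilon_1/(\gamma p^2)$ rather than $r\epsilon_1/(\gamma p^2)$, a factor-$a$ discrepancy that is of the same bookkeeping kind the paper itself does not fully nail down (its own intermediate claims for $U'$ and for $\mbox{Diag}(z)D$ are similarly loose), so it is not a substantive gap, but if you want the exponents to match the statement exactly you must be as economical about norms as you indicate in your final paragraph.
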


The main idea of the proof is to write $DERE^TD+V$ as $DER(E^T+V')D$. In this way the error $V$ can be translated to an error $V'$ on $E$ and Lemma~\ref{lem:recoveralmostanchor} can be applied. The error $U$ can be handled similarly.

\begin{proof}
We shall follow the proof of Lemma~\ref{lem:recoveralmostanchor}. First can express the error term $V$ instead as $V = (DER) V' (D)$. This is always possible because all of $D$, $E$, $R$ are invertible. Moreover, the $\ell_1$ norm of $V'$ when viewed as a vector is at most $8ra\epsilon_2/\gamma p^3$, because this norm will grow by a factor of at most $1/p$ when multiplied by $D^{-1}$, a factor of at most 2 when multiplied by $E^{-1}$ and at most $ra/\Gamma(R)$ when multiplied by $R^{-1}$. The bound of $\Gamma(R)$ comes from Lemma~\ref{lem:convergeR}, we lose an extra $ra$ because $R$ may not have rows sum up to 1.

Hence $DERE^TD + V  = DER(E^T+V') D$ and the additive error for $DERE^TD$ can be transformed into error in $E$, and we will be able to apply the analysis in Lemma~\ref{lem:recoveralmostanchor}.

Similarly, we can express the error term $U$ as $U = DER U'$. Entries of $U'$ have absolute value at most $8\epsilon_1 r/\gamma p^2$. The right hand side of the equation in step 3 is equal to $DER \vec{1}+U\vec{1}$ so the error is at most $\epsilon_1$ per entry. 
Following the proof of Lemma~\ref{lem:recoveralmostanchor}, we know $\mbox{Diag}(z)D$ has diagonal entries within $1\pm \left(2\epsilon + 16\epsilon_2/\gamma p^3 + 2\epsilon_1 \right)$.

Now we consider the output. The output for $A^T$ is equal to 
$$(DER(E^T+V')D\mbox{Diag}(z))^{-1}DER (A^T+U') = (D\mbox{Diag}(z))^{-1} (E^T+V')^{-1} (A^T+U').$$
Here we know $(E^T+V')^{-1}-I$ has $\ell_1$ norm at most $O(\epsilon+ra \epsilon_2/\gamma p^3)$ per row, $(D\mbox{Diag}(z))$ is a diagonal matrix with entries in $1\pm O(\epsilon+ra \epsilon_2/\gamma p^3 +\epsilon_1)$, entries of $U'$ has absolute value $O(\epsilon_1 r/\gamma p^2)$. Following the proof of Lemma~\ref{lem:recoveralmostanchor} the final entry-wise error of $A$ is roughly the sum of these three errors, and is bounded by $O(\epsilon+(ra \epsilon_2/p^3 +\epsilon_1 r/p^2)/\gamma)$ (Notice that Lemma~\ref{lem:recoveralmostanchor} gives bound for $\ell_1$ norm of rows, which is stronger. Here we switched to entry-wise error because the entries of $U$ are bounded while the $\ell_1$ norm of $U$ might be large).

Similarly, the output of $R$ is equal to $\mbox{Diag}(z) (DERE^TD + V) \mbox{Diag}(z)$. Again we write $\mbox{Diag}(z)D = I+Z_1$ and $E = I+Z_2$. The extra term $\mbox{Diag}(z) V \mbox{Diag}(z)$ is small because the entries of $z$ are at most to $2/p$ (otherwise $\mbox{Diag}(z)D$ won't be close to identity). The error can be bounded by $O(\epsilon+(ra \epsilon_2/p^3 +\epsilon_1 r/p^2)/\gamma)$.
\end{proof}

Now in order to prove our main theorem we just need to show that when number of documents is large enough, the matrix $Q$ is close to the $ARA^T$, and plug the error bounds into Lemma~\ref{lem:recoverwitherror}. 

\subsection{Error Bounds for $Q$}

Here we show that the matrix $Q$ indeed converges to $\frac{1}{m}AWW^TA^T = ARA^T$ when $m$ is large enough. 

\begin{lemma}
\label{lem:convergeQ}
When $m > \frac{50 \log n}{N\epsilon_Q^2}$, with high probability all entries of $Q - \frac{1}{m}A WW^TA^T$ have absolute value at most $\epsilon_Q$. Further, the $\ell_1$ norm of rows of $Q$ are also $\epsilon_Q$ close to the $\ell_1$ norm of the corresponding row in $\frac{1}{m} AWW^TA^T$. 
\end{lemma}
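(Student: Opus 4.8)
The plan is to condition on the realized topic--mixture matrix $W$ and prove the bound for each fixed $W$; it then holds unconditionally with the same probability, since the probability of the bad event is the expectation over $W$ of its conditional probability. For document $j$ write $\phi^{(j)}:=AW_{j}$ for its word distribution, a probability vector on $[n]$, so that $\tfrac1m AWW^{T}A^{T}=\tfrac1m\sum_{j}\phi^{(j)}(\phi^{(j)})^{T}$. The first step is to verify that this is exactly the conditional mean of $Q$. The point of splitting each document into two halves is that, conditioned on $W_{j}$, the first half $\tilde{M}_{j}$ and second half $\tilde{M}'_{j}$ are \emph{independent}, each a sum of $N/2$ i.i.d.\ draws from $\phi^{(j)}$; hence $\E[\tilde{M}_{j}\mid W]=\E[\tilde{M}'_{j}\mid W]=\tfrac{N}{2}\phi^{(j)}$ and, by independence,
$$\E\Bigl[\tfrac{4}{N^{2}}\,\tilde{M}_{j}(\tilde{M}'_{j})^{T}\,\Big|\,W\Bigr]=\tfrac{4}{N^{2}}\,\E[\tilde{M}_{j}\mid W]\,\bigl(\E[\tilde{M}'_{j}\mid W]\bigr)^{T}=\phi^{(j)}(\phi^{(j)})^{T}.$$
Averaging over the $m$ documents gives $\E[Q\mid W]=\tfrac1m\sum_{j}\phi^{(j)}(\phi^{(j)})^{T}=\tfrac1m AWW^{T}A^{T}$, the target matrix. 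Note also that $Q$ and $\tfrac1m AWW^{T}A^{T}$ both have nonnegative entries, a fact used for the $\ell_{1}$ claim.

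For the entrywise bound, fix $a,b$ and write $Q_{ab}=\tfrac1m\sum_{j}X_{j}$ with $X_{j}:=\tfrac{4}{N^{2}}\tilde{M}_{aj}\tilde{M}'_{bj}$; since each count is at most $N/2$ we have $X_{j}\in[0,1]$, and the $X_{j}$ are independent across $j$ given $W$. The key estimate is the \emph{variance} of $X_{j}$ (not just its second moment): since $\tilde{M}_{aj}$ and $\tilde{M}'_{bj}$ are independent binomials, a short computation gives $\Var(X_{j}\mid W)=O\!\bigl(\phi^{(j)}_{a}\phi^{(j)}_{b}/N\bigr)$ --- the $1/N$ appears because the count of a word among $N/2$ draws fluctuates only by $O(\sqrt{N})$, and the $\Theta(1)$-sized term $\phi_{a}^{2}\phi_{b}^{2}$ in $\E[X_{j}^{2}]$ is exactly cancelled by $\E[X_{j}]^{2}$. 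Hence $\sum_{j}\Var(X_{j}\mid W)=O(m/N)$, and Bernstein's inequality yields $\Pr\bigl[\,\bigl|Q_{ab}-\E[Q_{ab}\mid W]\bigr|>\epsilon_{Q}\,\bigm|\,W\bigr]\le 2\exp(-\Omega(Nm\epsilon_{Q}^{2}))$ in the relevant range of $\epsilon_{Q}$. A union bound over the $n^{2}$ pairs $(a,b)$ finishes this part once $m>\tfrac{50\log n}{N\epsilon_{Q}^{2}}$.

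For the row $\ell_{1}$-norm bound, nonnegativity of both matrices means the $\ell_{1}$ norm of a row equals its row sum, and the second-half counts telescope: since $\sum_{b}\tilde{M}'_{bj}=N/2$,
$$\|Q^{a}\|_{1}=\sum_{b}Q_{ab}=\frac{4}{N^{2}m}\sum_{j}\tilde{M}_{aj}\Bigl(\sum_{b}\tilde{M}'_{bj}\Bigr)=\frac{1}{m}\sum_{j}\frac{2}{N}\tilde{M}_{aj},$$
an average of the independent variables $Y_{j}:=\tfrac{2}{N}\tilde{M}_{aj}\in[0,1]$ with $\E[Y_{j}\mid W]=\phi^{(j)}_{a}$ and $\Var(Y_{j}\mid W)=\tfrac{2}{N}\phi^{(j)}_{a}(1-\phi^{(j)}_{a})\le\tfrac{2}{N}\phi^{(j)}_{a}$, so $\sum_{j}\Var(Y_{j}\mid W)\le\tfrac{2m}{N}$. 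By the same computation used for the mean of $Q$, together with $A^{T}\vec{1}=\vec{1}$ and $W^{T}\vec{1}=\vec{1}$, we get $\E[\|Q^{a}\|_{1}\mid W]=\tfrac1m\sum_{j}\phi^{(j)}_{a}=\bigl\|(\tfrac1m AWW^{T}A^{T})^{a}\bigr\|_{1}$. Bernstein's inequality with this variance gives $\Pr\bigl[\,\bigl|\,\|Q^{a}\|_{1}-\E[\|Q^{a}\|_{1}\mid W]\,\bigr|>\epsilon_{Q}\,\bigm|\,W\bigr]\le 2\exp(-\Omega(Nm\epsilon_{Q}^{2}))$, and a union bound over the $n$ rows completes the proof.

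The step I expect to be the crux is the variance estimate in the entrywise bound. A naive Hoeffding bound for $[0,1]$-valued summands, or one using only $\E[X_{j}^{2}]\le\E[X_{j}]$, gives $m=\Omega(\log n/\epsilon_{Q}^{2})$ with no $1/N$ savings, which is weaker than claimed; one must compute $\Var(X_{j})$ honestly and observe that the $\Theta(1)$-sized second-moment contributions cancel in the variance, leaving a residual of order $\phi^{(j)}_{a}\phi^{(j)}_{b}/N$. Only then does Bernstein's inequality deliver the factor of $N$ in the denominator of the required document count. The remaining work --- the binomial moment computations and tracking the constants through Bernstein's inequality --- is routine.
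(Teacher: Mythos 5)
Your proposal is correct and follows essentially the same route as the paper: condition on $W$, use the independence of the two document halves to get $\E[Q\mid W]=\frac1m AWW^TA^T$, bound the per-document variance of each entry by $O(1/N)$ so that Bernstein plus a union bound over the $n^2$ entries gives the claimed sample complexity, and handle the row $\ell_1$ norms by noting they reduce to concentration of word counts. Your variance computation ($O(\phi_a^{(j)}\phi_b^{(j)}/N)$) is in fact slightly sharper than the paper's uniform $O(1/N)$ bound, and your identification of the $1/N$ variance savings as the crux matches the paper's argument exactly.
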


\begin{proof}
We shall first show that the expectation of $Q$ is equal to
$ARA^T$ where $R$ is $\frac{1}{m}WW^T$. Then by concentration bounds we show that entries of $Q$ are close to their expectations. Notice that we can also hope to show that $Q$ converges to $AR(\mathcal{T})A^T$. However in that case we will not be able to get the inverse polynomial relationship with $N$ (indeed, even if $N$ goes to infinity it is impossible to learn $R(\mathcal{T})$ with only one document). Replacing $R(\mathcal{T})$ with the empirical $R$ allows our algorithm to perform better when the number of words per document is larger.

To show the expectation is correct we observe that conditioned on $W$, the entries of two matrices $\tilde{M}$ and $\tilde{M}'$ are independent. Their expectations are both $\frac{N}{2}AW$. Therefore,

\[
\E[Q] = \frac{4}{mN^2}\E[\tilde{M} \tilde{M}'^T] = \frac{1}{m} \left(\frac{2}{N} \E[\tilde{M}]\right) \left(\frac{2}{N} \E[\tilde{M}'^T]\right)= \frac{1}{m}  A W W^T A^T = A RA^T. 
\]

We still need to show that $Q$ is close to this expectation. This is not surprising because $Q$ is the average of $m$ independent samples (of $\frac{4}{N^2} \tilde{M}_i\tilde{M}_i'$). Further, the variance of each entry in $\frac{4}{N^2}\tilde{M}_i \tilde{M}_i'^T$ can be bounded because $\tilde{M}$ and $\tilde{M}'$ also come from independent samples. For any $i$, $j_1$, $j_2$, let $v = AW_i$ be the probability distribution that $\tilde{M}_i$ and $\tilde{M}_i'$ are sampled from, then $\tilde{M}_i(j_1)$ is distributed as $Binomial(N/2, v(j_1))$ and $\tilde{M}'_i(j_2)$ is distributed as $Binomial(N/2, v(j_2))$. The variance of these two variables are less than $N/8$ no matter what $v$ is by the properties of binomial distribution. Conditioned on the vector $v$ these two variables are independent, thus the variance of their product is at most $\Var\tilde{M}_i(j_1) \E \tilde{M}'_i(j_2)^2+\E\tilde{M}_i(j_1)^2 \Var \tilde{M}'_i(j_2)+\Var\tilde{M}_i(j_1) \Var \tilde{M}'_i(j_2) \le N^3/4+N^2/64$. The variance of any entry in $\frac{4}{N^2}\tilde{M}_i \tilde{M}_i'^T$ is at most $4/N+1/16N^2 = O(1/N)$. Higher moments can be bounded similarly and they satisfy the assumptions of Bernstein inequalities. Thus 
by Bernstein inequalities the probability that any entry is more than $\epsilon_Q$ away from its true value is much smaller than $1/n^2$.


The further part follows from the observation that the $\ell_1$ norm of a row in $Q$ is proportional to the number of appearances of the word. As long as the number of appearances concentrates the error in $\ell_1$ norm must be small. The words are all independent (conditioned on $W$) so this is just a direct application of Chernoff bounds.
\end{proof}



\subsection{Proving the Main Theorem}

We are now ready to prove Theorem~\ref{thm:main}:

\vspace{0.5pc}

\begin{proof}
By Lemma~\ref{lem:convergeQ} we know when we have at least $50\log n/N\epsilon_Q^2$ documents, $Q$ is entry wise close to $ARA^T$. In this case error per row for Theorem~\ref{thm:separablenoise} is at most $\epsilon_Q \cdot  O(a^2r^2/p^2)$ because in this step we can assume that there are at most $4ar/p$ words (see Section~\ref{subsec:freqwlog}) and to normalize the row we need a multiplicative factor of at most $10ar/p$ (we shall only consider rows with $\ell_1$ norm at least $p/10ar$, with high probability all the anchor words are in these rows). The $\gamma$ parameter for Theorem~\ref{thm:betterseparablenoise} is $p\gamma/4$ by Lemma~\ref{lem:convergeR}. Thus the almost anchor words found by the algorithm has weight at least $1 - O(\epsilon_Q a^2r^2/ \gamma p^3)$ on diagonals. The error for $DERE^TD$ is at most $\epsilon_Q r^2$, the error for any entry of $DERA^T$ and $DERA^T\vec{1}$ is at most $O(\epsilon_Q)$. Therefore by Lemma~\ref{lem:recoverwitherror} the entry-wise error for $A$ is at most $O(\epsilon_Q a^2r^3/\gamma p^3)$. 

When $\epsilon_Q < \epsilon p^3\gamma/a^2r^3$ the error is bounded by $\epsilon$ 
. In this case we need 

 $$m = \max\left\{O\left(\frac{\log n \cdot a^4r^6}{\epsilon^2p^6\gamma^2 N}\right), O\left(\frac{\log r \cdot a^2 r^4}{\gamma^2}\right)\right\}.$$

The latter constraint comes from Lemma~\ref{lem:convergeR}.

To get within $\epsilon$ additive error for the parameter $\alpha$, we further need $R$ to be close enough to the variance-covariance matrix of the document-topic distribution, which means $m$ is at least 

 $$m = \max\left\{O\left(\frac{\log n \cdot a^4r^6}{\epsilon^2p^6\gamma^2 N}\right), O\left(\frac{\log r \cdot a^2 r^4}{\gamma^2}\right) , O\left(\frac{\log r\cdot r^2}{\epsilon^2}\right)\right\}.$$

\end{proof}

\subsection{Reducing Dictionary Size}
\label{subsec:freqwlog}
Above we assumed that the number of distinct words is small. Here, we give a simple gadget that shows in the general case we
can assume that this is the case at the loss of an additional additive $\epsilon$ in our accuracy: 

\begin{lemma}
The general case can be reduced to an instance in which there are at most $4ar/\epsilon$ words all of which (with at most one exception) occur with probability at least $\epsilon/4ar$.
\end{lemma}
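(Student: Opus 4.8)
The plan is to prune away all words that are too rare to matter, lump them into a single artificial symbol, and run the main algorithm on the resulting small‑dictionary instance. For $w\in[n]$ let $p_w=\E_{X\sim\mathcal T}[(AX)_w]=\sum_i A_{w,i}\,\E[X_i]$ be the marginal (expected) frequency of word $w$ in a document. Since each column of $A$ and each column of $W$ sums to $1$, we get $\sum_w p_w=\sum_i\E[X_i]=1$, so there are at most $4ar/\epsilon$ words with $p_w>\epsilon/(4ar)$; call these \emph{common} and the rest \emph{rare}. I would form a reduced instance whose dictionary is the common words together with one new symbol $w_0$: its topic matrix $A'$ keeps the rows of $A$ on common words and sets $A'_{w_0,i}=\sum_{w\text{ rare}}A_{w,i}$, while $W$ and $\mathcal T$ (hence $a,\gamma,N$) are unchanged; note $A'$ still has unit columns. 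A document of the reduced instance has exactly the distribution of an original document with every rare token relabelled to $w_0$, so from the $m$ original documents we can simulate $m$ reduced‑instance documents — after classifying words as common/rare from their empirical frequencies, which concentrate to within $\pm\,\epsilon/(8ar)$ for the number of samples guaranteed by Theorem~\ref{thm:main} (and a word whose $p_w$ is ambiguous near the threshold may be put on either side). The reduced instance has $\le 4ar/\epsilon+1$ words, i.e.\ it satisfies condition $(*)$.

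Second, I must argue that solving the reduced instance solves the original one up to an extra additive $\epsilon$. On common words $A'$ and $A$ agree, so an estimate of those rows of $A'$ is an equally good estimate of the corresponding rows of $A$; for each rare word I simply output the all‑zero row. The resulting entry‑wise error on a rare word $w$ is $A_{w,i}\le ar\,p_w<\epsilon/4$, where I used $p_w\ge A_{w,i}\E[X_i]\ge A_{w,i}/(ar)$, the last step because $\E[X_i]\ge\frac1{ar}\sum_j\E[X_j]=\frac1{ar}$ by the definition of topic imbalance. Since the accuracy in Theorem~\ref{thm:main} is measured entrywise (cf.\ Lemma~\ref{lem:recoverwitherror}), this costs only the promised additional $\epsilon$. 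The remaining thing to check is that the reduced instance is still $p$‑separable: the anchor word $\pi(i)$ of topic $i$ has marginal probability $p_{\pi(i)}=A_{\pi(i),i}\E[X_i]\ge p/(ar)$, so it is retained as a common word as long as the rare‑threshold is at most $p/(ar)$; when $\epsilon\le p$ the threshold $\epsilon/(4ar)$ already has this property, and in general one uses threshold $\min(\epsilon,p)/(4ar)$, i.e.\ replaces $4ar/\epsilon$ by $4ar/\min(\epsilon,p)$ — which is precisely the $4ar/p$ bound invoked inside the proof of Theorem~\ref{thm:main}.

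The counting and the error bookkeeping are immediate, and the concentration argument needed to classify words as common or rare is routine. The one point that genuinely requires care — and the main obstacle — is ensuring the pruning never deletes an anchor word, since $p$‑separability of the reduced instance is exactly what makes the robust NMF step (Theorem~\ref{thm:betterseparablenoise}) applicable; this is why the threshold must be stated in terms of $\min(\epsilon,p)$ rather than $\epsilon$ alone.
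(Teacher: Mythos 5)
Your proposal is correct and follows essentially the same route as the paper: merge all low-frequency words into a single runoff symbol, classify words by empirical counts (with a one-sided concentration margin so the threshold constants work out), and note that every pruned word has all entries of $A$ at most $O(\epsilon)$, so assigning it the zero row costs only an additive $\epsilon$. Your extra care about not pruning anchor words (using a threshold of $\min(\epsilon,p)/(4ar)$) is a sensible refinement of a point the paper leaves implicit, since its proof of Theorem~\ref{thm:main} works with the $4ar/p$ dictionary bound and keeps exactly the rows of $\ell_1$ norm at least $p/10ar$, which contain all anchor words with high probability.
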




\begin{proof} 
In fact, we can collect all words that occur infrequently and ``merge'' all of these words into a aggregate word that we will call the {\em runoff word}. To this end, we call a word large if it appears more than $\epsilon mN/3ar$ times in $m = \frac{100ar \log n}{N\epsilon}$ documents, and otherwise we call it small. Indeed, with high probability all large words are words that occur with probability at least $\epsilon/4ar$ in our model. Also, all words that has a entry larger than $\epsilon$ in the corresponding row of $A$ will appear with at least $\epsilon/ar$ probability, and is thus a large word with high probability. We can merge all small words (i.e. rename all of these words to a single, new word). Hence we can apply the above algorithm (which assumed that there are not too many distinct words). After we get a result with the modified documents we can ignore the {\em runoff words} and assign $0$ weight for all the small words. The result will still be correct up to $\epsilon$ additive error.
\end{proof}

\vspace*{-0.1in}
\section{The Dirichlet Subcase}
\label{sec:dirichlet}
Here we demonstrate that the parameters of a Dirichlet distribution can be (robustly) recovered from just the covariance matrix $R(\calT)$.
Hence an immediate corollary is that our main learning algorithm can recover both the topic matrix $A$ and the distribution that generates columns of $W$
in a {\em Latent Dirichlet Allocation} (LDA) Model \cite{LDA}, provided that $A$ is separable. We believe that this algorithm may be of practical use, and provides the
first alternative to local search and (unproven) approximation procedures for this inference problem \cite{WJ}, \cite{DLR}, \cite{LDA}.



The Dirichlet distribution is parametrized by a vector $\alpha$ of positive reals is a natural  family of continuous multivariate probability distributions. 
The support of the Dirichlet Distribution is the unit simplex whose dimension is the same as the dimension of $\alpha$. Let $\alpha$ be a $r$ dimensional vector. Then for a vector $\theta \in \R^r$ in the $r$ dimensional simplex, its probability density is given by

\[
Pr[\theta|\alpha] = \frac{\Gamma(\sum_{i=1}^r \alpha_i)}{\prod_{i=1}^r\Gamma(\alpha_i)} \prod_{i=1}^r \theta_i^{\alpha_i-1},
\]

\noindent where $\Gamma$ is the Gamma function. In particular, when all the $\alpha_i$'s are equal to one, the Dirichlet Distribution is just the uniform random distribution over the probability simplex.

The expectation and variance of $\theta_i$'s are easy to compute given the parameters $\alpha$. We denote $\alpha_0 = \norm{\alpha}_1 = \sum_{i=1}^r \alpha_i$, then the ratio $\alpha_i/\alpha_0$ should be interpreted as the ``size'' of the $i$-th variable $\theta_i$, and $\alpha_0$ shows whether the distributions is concentrated in the interior (when $\alpha_0$ is large) or near the boundary (when $\alpha_0$ is small). The first two moments of Dirichlet Distribution is listed as below:

\[
\E[\theta_i ] = \frac{\alpha_i}{\alpha_0}.
\]
\[
\E[\theta_i\theta_j] = \left\{ \begin{array}{cl} \frac{\alpha_i\alpha_j}{\alpha_0(\alpha_0+1)} & \mbox{when } i\neq j \\ \frac{\alpha_i(\alpha_i+1)}{\alpha_0(\alpha_0+1)} & \mbox{when } i = j \end{array}\right. .
\]

Suppose the Dirichlet distribution has $\max \alpha_i/\min \alpha_i = a$ and the sum of parameters is $\alpha_0$; 
we give an algorithm that computes close estimates to the vector of parameters $\alpha$ given a sufficiently close estimate to the co-variance matrix $R(\calT)$ (Theorem~\ref{thm:dirrecov}). Combining this with Theorem~\ref{thm:main}, we obtain the following corollary: 

\begin{theorem}
\label{thm:dirichlet}
There is an algorithm that learns the topic matrix $A$ with high probability up to an additive error of $\epsilon$ from at most $$m = \max\left\{O\left(\frac{\log n \cdot a^6r^8(\alpha_0+1)^4}{\epsilon^2p^6 N}\right), O\left(\frac{\log r\cdot a^2r^4(\alpha_0+1)^2}{\epsilon^2}\right)\right\}$$ documents sampled from the LDA model and runs in time polynomial in $n$, $m$. Furthermore, we also recover the parameters of the Dirichlet distribution to within an additive $\epsilon$.
\end{theorem}

Our main goal in this section is to bound the $\ell_1$-condition number of the Dirichlet distribution (Section~\ref{subsec:condfordirichlet}), and using this we show how to recover the parameters of the distribution from its covariance matrix (Section~\ref{subsec:recoverdirichlet}).

\subsection{Condition Number of a Dirichlet Distribution}\label{sec:conddir}
\label{subsec:condfordirichlet}

  There is a well-known meta-principle that if a matrix $W$  is chosen by picking its
columns independently from a fairly diffuse distribution, then it will
be far from low rank. However, our analysis will require us to prove an explicit lower bound on
$\Gamma(R(\calT))$. We now prove such a bound when the columns of $W$ are chosen from a Dirichlet
distribution with parameter vector $\alpha$. We note that it is easy to establish such bounds for other types
of distributions as well. Recall that we defined $R(\calT)$ in Section~\ref{sec:back}, and here we will abuse
notation and throughout this section we will denote by $R(\alpha)$ the matrix $R(\calT)$ where $\calT$ is a Dirichlet distribution with parameter $\alpha$. 



Let $\alpha_0 = \sum_{i = 1}^r \alpha_i$. The mean, variance and co-variance for a Dirichlet distribution are well-known, from which we observe that $R(\alpha)_{i,j}$ is equal to $\frac{\alpha_i \alpha_j}{\alpha_0 (\alpha_0 + 1)}$ when $i \neq j$ and is equal to $\frac{\alpha_i (\alpha_i + 1)}{\alpha_0 (\alpha_0 + 1)}$ when $i = j$. 

\begin{lemma}\label{lem:dirichcondn}
The $\ell_1$ condition number of $R(\alpha)$ is at least $\frac{1}{2(\alpha_0+1)}$. 
\end{lemma}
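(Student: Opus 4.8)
The plan is to bound $\Gamma(R(\alpha)) = \min_{\|x\|_1 = 1} \|x \bar{R}(\alpha)\|_1$, where $\bar{R}(\alpha)$ is the row-normalized version of $R(\alpha)$. The first step is to understand the structure of $R(\alpha)$. Write $\mu_i = \alpha_i/\alpha_0 = \E[\theta_i]$, so $\sum_i \mu_i = 1$. From the moment formulas, $R(\alpha)_{i,j} = \frac{1}{\alpha_0+1}(\alpha_0 \mu_i \mu_j + \delta_{ij}\mu_i)$, i.e. in matrix form $R(\alpha) = \frac{1}{\alpha_0+1}\left(\alpha_0 \,\mu\mu^T + \mathrm{Diag}(\mu)\right)$. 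The $i$-th row sum of $R(\alpha)$ is $\frac{1}{\alpha_0+1}(\alpha_0\mu_i \cdot 1 + \mu_i) = \mu_i$, so the diagonal normalizing matrix $D$ has $D_{ii} = 1/\mu_i$, and the row-stochastic matrix is $\bar R(\alpha) = D R(\alpha) = \frac{1}{\alpha_0+1}\left(\alpha_0 \vec{1}\mu^T + I\right)$.

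Now the computation is clean: for a unit $\ell_1$ vector $x$, $x\bar R(\alpha) = \frac{1}{\alpha_0+1}\left(\alpha_0 (x^T\vec 1)\mu^T + x^T\right)$. Let $s = x^T\vec 1 = \sum_i x_i$; note $|s| \le \|x\|_1 = 1$. Then $\|x\bar R(\alpha)\|_1 = \frac{1}{\alpha_0+1}\|\alpha_0 s\,\mu + x\|_1$. I would like to show this is at least $\frac{1}{2(\alpha_0+1)}$, i.e. $\|\alpha_0 s\,\mu + x\|_1 \ge \tfrac12$. The key step is a triangle-inequality argument: since $\mu$ is a nonnegative vector with $\|\mu\|_1 = 1$, we have $\|\alpha_0 s\,\mu\|_1 = \alpha_0|s|$, so $\|\alpha_0 s\,\mu + x\|_1 \ge |\,\alpha_0|s| - \|x\|_1\,| $ only when they point oppositely — that is too lossy. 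Better: split on whether $|s|$ is small or large. If $|s| \le \frac{1}{2\alpha_0}$ (say), then $\|x\|_1 = 1$ dominates and $\|\alpha_0 s\mu + x\|_1 \ge 1 - \alpha_0|s| \ge \tfrac12$. If $|s|$ is larger, then $\alpha_0 s \mu$ has large $\ell_1$ norm; but the issue is that $x$ could partially cancel it. The cleaner route: observe that $\langle \vec 1, \alpha_0 s\mu + x\rangle = \alpha_0 s + s = (\alpha_0+1)s$, so $\|\alpha_0 s\mu + x\|_1 \ge (\alpha_0+1)|s|$. Combining the two regimes, $\|\alpha_0 s\mu + x\|_1 \ge \max\{(\alpha_0+1)|s|,\ 1 - \alpha_0|s|\}$, and this max is minimized when $(\alpha_0+1)|s| = 1-\alpha_0|s|$, i.e. $|s| = 1/(2\alpha_0+1)$, giving value $(\alpha_0+1)/(2\alpha_0+1) \ge \tfrac12$. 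Hence $\Gamma(R(\alpha)) \ge \frac{1}{2(\alpha_0+1)}$.

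**The main obstacle** I anticipate is getting the lower bound $1 - \alpha_0|s| \le \|\alpha_0 s\mu + x\|_1$ airtight when $x$ is allowed to have both signs and to be concentrated on the same coordinates as $\mu$: one needs $\|\alpha_0 s\mu + x\|_1 \ge \|x\|_1 - \|\alpha_0 s \mu\|_1 = 1 - \alpha_0|s|$, which is just the reverse triangle inequality and is fine, but one must be careful that this bound is only useful for small $|s|$. The two-regime split via the functional $\langle\vec 1, \cdot\rangle$ handles the large-$|s|$ case and is the crux. I would double-check the edge cases ($\alpha_0$ small, all $\alpha_i$ equal) and make sure the normalization direction (left vs. right multiplication, rows vs. columns) in the definition of $\Gamma$ matches Kleinberg–Sandler's convention as used elsewhere in the paper, since $R(\alpha)$ is symmetric this is mostly cosmetic but worth stating explicitly.
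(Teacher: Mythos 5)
Your proof is correct, and it shares the paper's setup: both normalize $R(\alpha)$ to exhibit the row-stochastic matrix $\frac{1}{\alpha_0+1}\left(\alpha_0\vec{1}\mu^T+I\right)$ (rank-one plus a scaled identity) and then lower bound $\|x\bar{R}(\alpha)\|_1$ directly. Where you diverge is the finishing inequality. The paper splits the coordinates of $x$ by sign, assumes WLOG that the positive part dominates (so $s=\sum_i x_i\ge 0$), and observes that on the positive support each entry of $x\bar{R}(\alpha)$ is nonnegative and at least $x_i/(\alpha_0+1)$; summing gives $\frac{1}{2(\alpha_0+1)}$ since the positive part carries at least half the $\ell_1$ mass. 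You instead combine two global bounds on $\|\alpha_0 s\,\mu+x\|_1$ — the reverse triangle inequality $\ge 1-\alpha_0|s|$ and the all-ones functional $\ge(\alpha_0+1)|s|$ — and minimize the max over $|s|$, which is clean, avoids the sign-splitting/WLOG step, and in fact yields the marginally sharper constant $\frac{1}{2\alpha_0+1}\ge\frac{1}{2(\alpha_0+1)}$. Both arguments are elementary and equally robust for the way the lemma is used downstream (only the $1/(\alpha_0+1)$ scaling matters); your normalization matches the paper's convention ($\Gamma(DB)$ with $D$ making rows sum to $1$), so there is no gap.
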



\begin{proof}
As the entries $R(\alpha)_{i,j}$ is $\frac{\alpha_i \alpha_j}{\alpha_0 (\alpha_0 + 1)}$ when $i \neq j$ and $\frac{\alpha_i (\alpha_i + 1)}{\alpha_0 (\alpha_0 + 1)}$ when $i = j$, after normalization $R(\alpha)$ is just the matrix $D' = \frac{1}{\alpha_0+1}\left(\alpha \times (1,1, ..., 1) + I\right)$ where $\times$ is outer product and $I$ is the identity matrix. 

Let $x$ be a vector such that $|x|_1 = 1$ and $|D' x|_1$ achieves the minimum in $\Gamma(R(\alpha))$ and let $I = \{i | x_i \geq 0\}$ and let $J = \bar{I}$ be the complement. We can assume without loss of generality that $\sum_{i \in I} x_i \geq | \sum_{i \in J} x_i |$ (otherwise just take $-x$ instead). The product $D'x$ is $ \frac{\sum x_i}{\alpha_0+1} \alpha + \frac{1}{\alpha_0+1} x$. The first term is a nonnegative vector and hence for each $i \in I$, $(D' x)^i \geq 0$. This implies that $$| D' x |_1 \geq \frac{1}{\alpha_0+1} \sum_{i \in I} x_i \ge \frac{1}{2(\alpha_0+1)}.$$
\end{proof}

\subsection{Recovering the Parameters of a Dirichlet Distribution}\label{sec:recovdir}
\label{subsec:recoverdirichlet}
When the variance covariance matrix $R(\alpha)$ is recovered with error $\epsilon_R$ in $\ell_1$ norm when viewed as a vector, we can use Algorithm~\ref{alg:dirichlet}: {\sc Dirichlet} to compute the parameters for the Dirichlet.

\begin{fragment*}[t]
\caption{\label{alg:dirichlet}
{\sc Dirichlet($R$)},  \textbf{Input:} $R$, \textbf{Output:} $\alpha$ (vector of parameters)}

\begin{enumerate} \itemsep 0pt
\small 
\item Set $\alpha / \alpha_0 = R \vec{1}$.
\item Let $i$ be the row with smallest $\ell_1$ norm, let $u = R_{i.i}$ and $v = \alpha_i/\alpha_0$.
\item Set $\alpha_0 = \frac{1 - u/v} {u/v - v}$.
\item Output $\alpha = \alpha_0 \cdot \left(\alpha/\alpha_0\right)$.
\end{enumerate}
\end{fragment*}

\begin{theorem}\label{thm:dirrecov}
When the variance covariance matrix $R(\alpha)$ is recovered with error $\epsilon_R$ in $\ell_1$ norm when viewed as a vector, the procedure {\sc Dirichlet($R$)} learns the parameter of the Dirichlet distribution with error at most $O(ar(\alpha_0+1) \epsilon_R)$.
\end{theorem}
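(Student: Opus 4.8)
The plan is to track how the $\ell_1$ error $\epsilon_R$ in the covariance matrix propagates through each of the three computational steps of {\sc Dirichlet($R$)}. First I would set up notation: write $R = R(\alpha) + Z_R$ where $\|Z_R\|_1 \le \epsilon_R$ when $Z_R$ is viewed as a vector, and recall that the true matrix satisfies $R(\alpha)\vec 1 = \alpha/\alpha_0$ (this is immediate from the moment formulas, since $\sum_j \E[\theta_i\theta_j] = \E[\theta_i \sum_j \theta_j] = \E[\theta_i] = \alpha_i/\alpha_0$). Hence Step~1 recovers $\alpha/\alpha_0$ with $\ell_1$ error at most $\epsilon_R$; in particular each coordinate $\alpha_i/\alpha_0$ is recovered with additive error at most $\epsilon_R$. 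Since $\alpha_i/\alpha_0 \ge \frac{1}{ar}$ (the smallest expected entry is at least a $1/a$ fraction of the average, which is $1/r$), these estimates are also correct to within a $(1 \pm O(ar\epsilon_R))$ multiplicative factor when $\epsilon_R$ is small enough.

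Next I would analyze Step~3, which is where the real work is. The exact identity being inverted is: for the index $i$ achieving the smallest $\ell_1$ norm (equivalently smallest $\alpha_i$), writing $v = \alpha_i/\alpha_0$ and $u = R(\alpha)_{i,i} = \frac{\alpha_i(\alpha_i+1)}{\alpha_0(\alpha_0+1)}$, one has the algebraic identity $u/v = \frac{\alpha_i + 1}{\alpha_0 + 1} = \frac{v\alpha_0 + 1}{\alpha_0+1}$, which rearranges to $\alpha_0 = \frac{1 - u/v}{u/v - v}$. The key quantitative point is to lower-bound the denominator $u/v - v = \frac{(1-v)\alpha_0}{(\alpha_0+1)(\cdots)}$ — more precisely $u/v - v = \frac{\alpha_i+1}{\alpha_0+1} - \frac{\alpha_i}{\alpha_0} = \frac{\alpha_0 - \alpha_i}{\alpha_0(\alpha_0+1)}$, and since $i$ is the index of the \emph{smallest} $\alpha_i$ we have $\alpha_0 - \alpha_i \ge (1 - 1/r)\alpha_0 \ge \alpha_0/2$ (for $r \ge 2$), so the denominator is at least $\frac{1}{2(\alpha_0+1)}$, bounded away from zero. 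The numerator $1 - u/v = \frac{\alpha_0 - \alpha_i}{\alpha_0+1}$ is $O(1)$. Now I feed in the errors: $u$ is known with additive error $\epsilon_R$, $v$ with additive error $\epsilon_R$, and both $u,v \ge \Omega(1/ar)$ (using $u \ge \frac{\alpha_i}{\alpha_0}\cdot\frac{\alpha_i}{\alpha_0+1}$, which could be as small as $\Omega(1/(ar\alpha_0))$ — I would be careful here and may instead bound $u/v$ directly, since $u/v = \frac{\alpha_i+1}{\alpha_0+1} \ge \frac{1}{\alpha_0+1}$ and the error in $u/v$ is $O(ar(\alpha_0+1)\epsilon_R)$ by standard quotient perturbation). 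Then dividing numerator by denominator, and using that the denominator is $\ge \frac{1}{2(\alpha_0+1)}$, the error in $\alpha_0$ blows up by a factor of $O(\alpha_0+1)$, giving $|\widehat{\alpha_0} - \alpha_0| = O(ar(\alpha_0+1)^2\epsilon_R)$. I would double-check the exact power of $(\alpha_0+1)$ here, since it feeds into the final bound.

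Finally, Step~4 outputs $\alpha = \widehat{\alpha_0}\cdot(\widehat{\alpha/\alpha_0})$, a product of the Step~3 estimate and the Step~1 estimate. Each factor $\alpha_i/\alpha_0$ is at most $1$ and is known with additive error $\epsilon_R$; $\widehat{\alpha_0}$ is known with additive error $O(ar(\alpha_0+1)^2\epsilon_R)$ and has magnitude $O(\alpha_0)$. By the product rule for error propagation, $|\widehat{\alpha_i} - \alpha_i| \le \widehat{\alpha_0}\cdot\epsilon_R + (\alpha_i/\alpha_0)\cdot O(ar(\alpha_0+1)^2\epsilon_R) = O(ar(\alpha_0+1)^2\epsilon_R)$ per coordinate. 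This matches the claimed $O(ar(\alpha_0+1)\epsilon_R)$ up to one factor of $(\alpha_0+1)$, so the main obstacle I anticipate is bookkeeping the powers of $(\alpha_0+1)$ carefully — in particular whether the perturbation of the ratio $u/v$ really costs a full factor of $(\alpha_0+1)$ or whether a tighter argument (e.g. working with $u/v - v$ as a single quantity whose derivative with respect to $\alpha_0$ I can bound directly) saves one factor and recovers exactly $O(ar(\alpha_0+1)\epsilon_R)$. Everything else is routine: the structural identities are exact, the denominators are bounded away from zero by the choice of the smallest-norm row, and all quantities involved are polynomially bounded.
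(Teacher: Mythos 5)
Your route is the same as the paper's: both arguments propagate the entrywise error $\epsilon_R$ through the three steps of {\sc Dirichlet($R$)}, using the exact identities $R(\alpha)\vec{1}=\alpha/\alpha_0$ and $\alpha_0=\frac{1-u/v}{u/v-v}$, the lower bound $v\ge 1/ar$, and the denominator bound $\frac{\alpha_i+1}{\alpha_0+1}-\frac{\alpha_i}{\alpha_0}\ge\frac{1}{2(\alpha_0+1)}$ that comes from choosing the smallest row. Your structural analysis is correct; the only issue is the final power of $(\alpha_0+1)$, and one of the two factors you are worried about is easy to remove and is exactly what the paper does: bound the error of $u/v$ additively rather than via relative errors. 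Since $u/v\le 1$ and $\hat v\ge 1/(ar)-\epsilon_R$,
$$\Bigl|\frac{\hat u}{\hat v}-\frac{u}{v}\Bigr|\le\frac{|\hat u-u|+\frac{u}{v}\,|\hat v-v|}{\hat v}=O(ar\epsilon_R),$$
with no $(\alpha_0+1)$ at all; this is the paper's statement that ``the error for $u/v$ is at most $2ar\epsilon_R$.''

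The remaining factor is the genuine point of divergence, and your hesitation is well placed: it is not recovered by any hidden idea in the paper. Writing $w=u/v$ with $|\delta_w|=O(ar\epsilon_R)$ and $|\delta_v|\le\epsilon_R$, the full perturbation of the quotient $\alpha_0=\frac{1-w}{w-v}$ contains, besides $\frac{\delta_w}{w-v}=O(ar(\alpha_0+1)\epsilon_R)$, the denominator term $\frac{1-w}{w-v}\cdot\frac{|\delta_w|+|\delta_v|}{w-v}=\alpha_0\cdot O(ar(\alpha_0+1)\epsilon_R)$, so the careful accounting you carried out yields an additive error $O(ar(\alpha_0+1)^2\epsilon_R)$ on $\alpha_0$ (and hence on $\alpha$), i.e., your proposal as written proves the theorem with $(\alpha_0+1)^2$ in place of $(\alpha_0+1)$. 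The paper's own proof is a one-line estimate that charges only the errors of $u/v$ and $v$ against the denominator lower bound $\frac{1}{2(\alpha_0+1)}$, arriving at $5ar(\alpha_0+1)\epsilon_R$, and does not carry the $\alpha_0$-amplified term you (correctly) include; what this route really controls with a single power of $(\alpha_0+1)$ is the additive error of $\frac{1}{\alpha_0+1}=\frac{w-v}{1-v}$, equivalently the relative error of $\alpha_0+1$, not the additive error of $\alpha_0$. So to match the stated bound literally you must either adopt the paper's coarser accounting, state the guarantee on $\alpha_0$ multiplicatively, or treat $\alpha_0$ as $O(1)$; there is no tighter argument in the paper that saves the factor within this additive-error framework.
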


\begin{proof}
The $\alpha_i/\alpha_0$'s all have error at most $\epsilon_R$. The value $u$ is $\frac{\alpha_i}{\alpha_0}\frac{\alpha_i+1}{\alpha_0+1}\pm \epsilon_R$ and the value $v$ is $\alpha_i/\alpha_0\pm \epsilon_R$. Since $v \ge 1/ar$  we know the error for $u/v$ is at most $2 a r\epsilon_R$. Finally we need to bound the denominator $\frac{\alpha_i+1}{\alpha_0+1} - \frac{\alpha_i}{\alpha_0} > \frac{1}{2(\alpha_0+1)}$ (since $\frac{\alpha_i}{\alpha_0} \le 1/r\le 1/2$). Thus the final error is at most $5ar(\alpha_0+1) \epsilon_R$.
\end{proof}

\section{Obtaining Almost Anchor Words}
\label{sec:betternmf}
In this section, we prove Theorem~\ref{thm:betterseparablenoise}, which we restate here:

\begin{theorem}
Suppose $M=AW$ where $W$ and $M$ are normalized to have rows sum up to 1,  
$A$ is separable and $W$ is $\gamma$-robustly simplicial. When $\epsilon < \gamma/100$  there is a polynomial time algorithm that given $\tilde{M}$ such that for all rows $\|\tilde{M}^i - M^i\|_1<\epsilon$, finds $r$ row (almost anchor words) in $\tilde{M}$. The $i$-th almost anchor word corresponds to a row in $M$ that  can be represented as $(1-O(\epsilon/\gamma))W^i + O(\epsilon/\gamma) W^{-i}$. Here $W^{-i}$ is a vector in the convex hull of other rows in $W$ with unit length in $\ell_1$ norm.
\end{theorem}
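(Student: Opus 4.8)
The plan is to first strip the statement down to a clean question in convex geometry and then invoke a recalibrated version of the extreme-point extraction of \cite{AGKM}. Because the rows of both $M$ and $W$ are normalized to sum to $1$ and $A$ is separable, the anchor-word row $\pi(i)$ of $A$ has a single nonzero entry, in column $i$; since $\sum_k W^i_k = 1$ that entry must equal $1$, so $M^{\pi(i)} = W^i$ exactly, and every row $M^j=\sum_i A_{ji}W^i$ is a genuine convex combination of the rows of $W$. Hence the rows of $M$ are points of the simplex $\Delta=\mathrm{conv}(W^1,\dots,W^r)$, all $r$ vertices are attained (by the anchor rows), and ``$W$ is $\gamma$-robustly simplicial'' says precisely that each vertex is at $\ell_1$-distance at least $\gamma$ from the convex hull of the others. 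The key elementary observation is a contraction inequality: writing $M^j=\sum_k\lambda_k W^k$ and $i=\arg\max_k\lambda_k$, one has $M^j-W^i=(1-\lambda_i)(v-W^i)$ for some $v\in\mathrm{conv}(\{W^k\}_{k\ne i})$, so $\|M^j-W^i\|_1=(1-\lambda_i)\|v-W^i\|_1$ and therefore $1-\lambda_i\le\|M^j-W^i\|_1/\gamma$. Thus it suffices to produce, for each topic $i$, a row of $\tilde M$ whose underlying $M$-row lies within $\ell_1$-distance $O(\epsilon)$ of $W^i$; such a row is automatically an almost anchor word with off-vertex weight $O(\epsilon/\gamma)$.

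To find such rows I would run the separable-NMF extreme-point routine of \cite{AGKM} on $\tilde M$, with tolerances recalibrated for the regime $\epsilon<\gamma/100$: use the LP-based test of \cite{AGKM} to flag the rows of $\tilde M$ that are robustly extreme (not approximately representable as a convex combination of the remaining rows), cluster the flagged rows, and return one representative per cluster. The argument then has two halves. Existence: since the noiseless vertices $W^1,\dots,W^r$ are pairwise at least $\gamma$ apart, the $\epsilon$-perturbed anchor rows $\tilde M^{\pi(1)},\dots,\tilde M^{\pi(r)}$ fall into $r$ distinct clusters; each is robustly extreme because, with rows near every other vertex present, deleting that cluster leaves a point set whose convex hull is within Hausdorff distance $O(\epsilon)$ of $\mathrm{conv}(\{W^k\}_{k\ne i})$, which is at $\ell_1$-distance at least $\gamma$ from $W^i$. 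Moreover any row in the interior or on a proper face of $\Delta$ is within $O(\epsilon)$ of the convex hull of the remaining rows (these already include the anchor rows, whose convex hull is all of $\Delta$), so only rows near vertices are ever flagged. Confinement: I then need the cluster flagged near $W^i$ to be contained in an $O(\epsilon)$-ball around $W^i$, so that the contraction inequality bounds the off-vertex weight of the representative by $O(\epsilon/\gamma)$.

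The confinement step is the main obstacle, and it is exactly where one must improve on Theorem~\ref{thm:separablenoise}. A black-box application of \cite{AGKM} localizes the extreme points only to $\ell_1$-distance $O(\epsilon/\gamma)$ of the true vertices — its analysis turns an $O(\epsilon)$ residual bound into an $O(\epsilon/\gamma)$ distance bound — which via the contraction inequality would give off-vertex weight $O(\epsilon/\gamma^2)$, the bound of Theorem~\ref{thm:separablenoise}. To reach $O(\epsilon/\gamma)$ one must localize the returned row to within $O(\epsilon)$ of a vertex, and this requires (i) tracking directly the off-vertex weight $1-\lambda_i$, so that the only loss is the single $1/\gamma$ in the contraction inequality and one never has to invert the ill-conditioned simplex; and (ii) handling the degenerate case in which many words are near-anchor for topic $i$ and their $M$-rows form a cloud filling a neighborhood of $W^i$ — the clustering radius and the flagging threshold must be calibrated (using $\epsilon<\gamma/100$, so that the $O(\epsilon)$ sampling slack and the $O(\epsilon/\gamma)$ cluster-radius slack stay a small fraction of $\gamma$) so that the genuine anchor's tight $O(\epsilon)$-cluster still separates from the convex hull of everything else, rather than being shadowed by the near-anchor cloud. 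Carrying out (i) and (ii) — in effect redoing the \cite{AGKM} error analysis while tracking only the quantity relevant to us — is the technical core of the proof.
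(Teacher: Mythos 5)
Your setup is sound: the reduction to geometry, the exact identity $M^{\pi(i)}=W^i$, and the contraction inequality $1-\lambda_i\le \|M^j-W^i\|_1/\gamma$ are all correct, and the last of these is indeed the mechanism by which the paper converts distances into weight bounds. But the proposal has a genuine gap, and you point at it yourself: you state that recalibrating the extreme-point test so that it survives the ``near-anchor cloud'' and tracks the off-vertex weight $1-\lambda_i$ directly ``is the technical core of the proof,'' and you do not carry it out. That core is precisely what the paper supplies, and it is not a recalibration of $\ell_1$-ball clustering. The paper replaces $\ell_1$-balls by an asymmetric, weight-based notion: $M'^j$ is $(\delta,\epsilon)$-close to $M'^i$ if some convex combination of rows placing weight at least $1-\delta$ on $M'^j$ lies within $\epsilon$ of $M'^i$ (checkable by an LP). A row is then a ``robust loner'' if it is far (more than $2\epsilon$) from the convex hull of the rows \emph{outside} its $(6\epsilon/\gamma,2\epsilon)$-neighborhood. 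Two short lemmas (the analogue of Lemma~\ref{lem:farfromcanonical} together with Claim~\ref{claim:repsentationwitherror}, and the converse that every canonical row is a loner) show that the robust loners are exactly the rows with weight at least $1-10\epsilon/\gamma$ on some $W^t$; loners are then grouped by mutual $(10\epsilon/\gamma,2\epsilon)$-closeness, one representative is taken per component, and a preliminary run with $\gamma=100\epsilon$ handles unknown $\gamma$. Without this (or an equivalent) device, your existence/confinement argument does not go through: with a cloud of near-anchor words whose $M$-rows sit at $\ell_1$-distance anywhere up to $\Theta(\epsilon/\gamma)$ from $W^i$, the true anchor row can be within $O(\epsilon/\gamma)\gg\epsilon$ of the convex hull of the rows outside any fixed $O(\epsilon)$-radius cluster, so no choice of ball radius and flagging threshold makes your test both accept the anchors and reject everything it must reject; the correct ``neighborhood'' has to be defined by weight, not by distance.

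A second, smaller point: your intermediate target is stronger than what the theorem asserts and than what the paper proves, and it is doubtful it can be met. You aim to return a row whose underlying $M$-row is within $O(\epsilon)$ of $W^i$ in $\ell_1$ and then apply the contraction inequality; but once several words place weight $1-O(\epsilon/\gamma)$ on topic $i$, all of them are legitimate robust loners and end up indistinguishable by any of the tests in play, and the row the algorithm returns may be at $\ell_1$-distance $\Theta(\epsilon/\gamma)$ from $W^i$ while still satisfying the stated conclusion, namely that its $M$-row equals $(1-O(\epsilon/\gamma))W^i+O(\epsilon/\gamma)W^{-i}$. The right move is to prove the weight bound directly (as the paper's loner lemmas do), not to first localize in $\ell_1$ to accuracy $O(\epsilon)$ and then translate. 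So while your diagnosis of the obstacle is accurate, the proof as proposed is incomplete at exactly the step that distinguishes Theorem~\ref{thm:betterseparablenoise} from the black-box guarantee of Theorem~\ref{thm:separablenoise}.
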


 The major weakness of the algorithm in \cite{AGKM} is that it only considers the $\ell_1$ norm. However in an $r$ dimensional simplex there is another natural measure of distance more suited to our purposes: since each point is a unique convex combination of the vertices, we can view this convex combination as a probability distribution and use statistical distance (on this representation) as a norm for points inside the simplex. We will in fact need a slight modification to this norm, since we would like it to extend to points outside the simplex too: 
 


\begin{definition} [$(\delta, \epsilon)$-close] A point $M'^j$ is $(\delta, \epsilon)$-close to $M'^i$ if and only if
$$
\min_{c_k \ge 0, \sum_{k=1}^n c_k = 1, c_j \ge 1-\delta} \| M'^i - \sum_{k=1}^n c_k M'^k\|_1 \le \epsilon.
$$
\end{definition}

Intuitively think of point $M'^j$ is $(\delta, \epsilon)$-close to $M'^i$ if $M'^i$ is $\epsilon$ close in $\ell_1$ distance to some point $Q$, where $Q$ is a convex combination of the rows of $M'$ that places at least $1-\delta$ weight on $M'^j$. Notice that this definition is not a distance since it is not symmetric, but we will abuse notation and nevertheless call it a distance function. 
We remark that this distance is easy to compute: To check whether $M'^j$ is $(\delta, \epsilon)$-close to $M'^i$ we just need to solve a linear program that minimizes the $\ell_1$ distance when the $c$ vector is a probability distribution with at least $1-\delta$ weight on $j$ (the constraints on $c$ are clearly all linear). 

We also consider all points that a row $M'^j$ is close to, this is called the neighborhood of $M'^j$. 

\begin{definition}[$(\delta,\epsilon)$-neighborhood] The $(\delta,\epsilon)$-neighborhood of $M'^j$ are the rows $M'^i$ such that $M'^j$ is $(\delta,\epsilon)$-close to $M'^i$.
\end{definition}

 For each point $M'^j$, we know its original (unperturbed) point $M_j$ is in a convex combination of $W^i$'s: $M^j = \sum_{i=1}^r A_{j,i} W^i$. Separability implies that for any column index $i$ there is a  row $f(i)$ in $A$ whose only nonzero entry is in the $i^{th}$ column.
Then $M^{f(i)} = W^i$ and consequently $\|M'^{f(i)} - W^i\|_1 < \epsilon$. Let us call these rows $M'^{f(i)}$ for all $i$ the {\em canonical rows}. 
From the above description the following claim is clear. 

\begin{claim}
\label{claim:repsentationwitherror}
Every row $M'^j$ has $\ell_1$-distance at most $2\epsilon$ to the convex hull of canonical rows.
\end{claim}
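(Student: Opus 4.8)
The plan is to first establish the claim \emph{exactly} for the unperturbed matrix $M$, and then transfer it to $M'$ by two applications of the triangle inequality, reusing the very same convex-combination coefficients.

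First I would record the elementary consequences of separability together with the normalization that rows of $M$ and rows of $W$ have unit $\ell_1$ norm. Writing $M^j = \sum_{i=1}^r A_{j,i} W^i$ with all vectors and all $A_{j,i}$ nonnegative and taking $\ell_1$ norms gives $\sum_{i=1}^r A_{j,i} = 1$, so $(A_{j,i})_{i}$ is a genuine convex combination. Applied to the separable row $f(i)$, whose unique nonzero entry $A_{f(i),i}$ lies in column $i$, this forces $A_{f(i),i}=1$, hence $M^{f(i)} = W^i$. Consequently, for every $j$, $M^j = \sum_{i=1}^r A_{j,i} M^{f(i)}$ lies exactly in the convex hull of the unperturbed canonical rows, and $\|M'^{f(i)} - M^{f(i)}\|_1 = \|M'^{f(i)} - W^i\|_1 < \epsilon$.

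Then I would run the perturbation argument. Fix $j$ and set $P := \sum_{i=1}^r A_{j,i} M'^{f(i)}$, a point in the convex hull of the canonical rows $M'^{f(i)}$. Bound
$$
\|M'^j - P\|_1 \;\le\; \|M'^j - M^j\|_1 \;+\; \Big\| M^j - \sum_{i=1}^r A_{j,i} M'^{f(i)} \Big\|_1 .
$$
The first term is at most $\epsilon$ by hypothesis on $M'=\tilde M$. For the second, substitute $M^j = \sum_{i=1}^r A_{j,i} M^{f(i)}$ and use the triangle inequality together with $\sum_i A_{j,i}=1$ to get $\sum_{i=1}^r A_{j,i}\, \|M^{f(i)} - M'^{f(i)}\|_1 \le \epsilon$. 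Adding the two bounds yields $\|M'^j - P\|_1 \le 2\epsilon$, which is the claim.

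There is essentially no hard step: the only point requiring care is the normalization bookkeeping that makes $(A_{j,i})_i$ an honest convex combination, since that is precisely what lets the same coefficients be deployed against the \emph{perturbed} canonical rows. (If desired, one can phrase this in the $(\delta,\epsilon)$-language introduced above, but the bare convex-hull statement is all that is used in the sequel.)
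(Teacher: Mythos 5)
Your proof is correct and matches the paper's argument: the same triangle-inequality decomposition through $M^j=\sum_i A_{j,i}M^{f(i)}$ with the perturbed canonical rows, the only difference being that you spell out the normalization making $(A_{j,i})_i$ a convex combination and $A_{f(i),i}=1$, which the paper leaves implicit. Nothing further is needed.
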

\begin{proof} We have:
$$\|M'^j - \sum_{k=1}^r A_{j,k} M'^{f(k)} \|_1 \le  \|M'^j-M^j \|_1  + \| M^j - \sum_{k=1}^r A_{j,k} M^{f(k)}\|_1  +\| \sum_{k=1}^r A_{j,k} (M^{f(k)} - M'^{f(k)})\|_1$$
and we can bound the right hand side by $2 \epsilon$.
\end{proof} 

The algorithm will distinguish rows that are close to vertices and rows that are far by testing whether each row is close (in $\ell_1$ norm) to the convex hull of rows outside its neighborhood. In particular, we define a robust loner as:

\begin{definition}[robust loner]
We call a row $M'^j$ a robust-loner if it has $\ell_1$ distance at most $2\epsilon$ to the convex hull of rows that are outside its $(6\epsilon/\gamma, 2\epsilon)$ neighborhood. 
\end{definition}

Our goal is to show that a row is a robust loner if and only if it is close to some row in $W$. The following lemma establishes one direction:


\begin{lemma}
\label{lem:farfromcanonical}
If $A_{j,t}$ is smaller than $1 - 10\epsilon/\gamma$, the point $M'^j$ cannot be $(6\epsilon/\gamma, 2\epsilon)$-close to the canonical row that corresponds to $W^t$.
\end{lemma}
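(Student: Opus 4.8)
The plan is to argue by contradiction. Suppose $M'^j$ \emph{is} $(6\epsilon/\gamma, 2\epsilon)$-close to the canonical row $M'^{f(t)}$ (the one with $M^{f(t)} = W^t$). By the definition of $(\delta,\epsilon)$-closeness there is a probability vector $c = (c_1,\dots,c_n)$ with $c_j \ge 1 - 6\epsilon/\gamma$ and $\|M'^{f(t)} - \sum_k c_k M'^k\|_1 \le 2\epsilon$. The first step is to pass from the perturbed rows $M'$ to the true rows $M$: since $\|M'^k - M^k\|_1 < \epsilon$ for every $k$ we get $\|\sum_k c_k M'^k - \sum_k c_k M^k\|_1 < \epsilon$, and since $M^{f(t)} = W^t$ we also have $\|M'^{f(t)} - W^t\|_1 < \epsilon$. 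Adding these via the triangle inequality yields $\|W^t - \sum_k c_k M^k\|_1 < 4\epsilon$.

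Next I would expand the convex combination using $M = AW$. Writing $M^k = \sum_{l=1}^r A_{k,l} W^l$ and exchanging the order of summation, $\sum_k c_k M^k = \sum_{l=1}^r \beta_l W^l$ with $\beta_l := \sum_k c_k A_{k,l}$. Because $A$ is nonnegative with rows summing to $1$ (forced, since $M$ and $W$ are row-normalized) and $c$ is a probability vector, $\beta = (\beta_1,\dots,\beta_r)$ is a probability vector on $[r]$. I then bound $\beta_t$ from above by splitting off the $k=j$ term and using $A_{k,t}\le 1$ on the rest, together with $c_j A_{j,t}\le A_{j,t}$, the hypothesis $A_{j,t} < 1 - 10\epsilon/\gamma$, and $1 - c_j \le 6\epsilon/\gamma$:
\[
\beta_t \;=\; c_j A_{j,t} + \sum_{k\ne j} c_k A_{k,t} \;\le\; A_{j,t} + (1-c_j) \;<\; \bigl(1 - 10\epsilon/\gamma\bigr) + 6\epsilon/\gamma \;=\; 1 - 4\epsilon/\gamma .
\]
Hence $1 - \beta_t > 4\epsilon/\gamma$; that is, $\beta$ places strictly more than $4\epsilon/\gamma$ of its weight off coordinate $t$.

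The last step invokes $\gamma$-robust simpliciality of $W$. Set $W^{-t} := \tfrac{1}{1-\beta_t}\sum_{l\ne t}\beta_l W^l$, which is well-defined ($1-\beta_t>0$) and lies in the convex hull of the rows of $W$ other than $W^t$. Then $\sum_l \beta_l W^l = \beta_t W^t + (1-\beta_t) W^{-t}$, so $W^t - \sum_l \beta_l W^l = (1-\beta_t)(W^t - W^{-t})$, and the estimate from the first step becomes $(1-\beta_t)\,\|W^t - W^{-t}\|_1 < 4\epsilon$. Since $W$ is $\gamma$-robustly simplicial we have $\|W^t - W^{-t}\|_1 \ge \gamma$, forcing $1-\beta_t \le 4\epsilon/\gamma$, which contradicts the strict lower bound $1 - \beta_t > 4\epsilon/\gamma$ established above. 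This contradiction proves the lemma.

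The whole argument is short and is essentially just bookkeeping; the only delicate point is the choice of constants, namely ensuring that the two occurrences of $4\epsilon/\gamma$ genuinely collide --- one arising as the $(2\epsilon+\epsilon+\epsilon)$ accumulation of sampling noise divided by the simplicial margin $\gamma$, the other as the slack $(1-A_{j,t}) - (1-c_j)$. This is exactly what fixes the neighborhood radius $6\epsilon/\gamma$ and the threshold $1-10\epsilon/\gamma$ in the hypothesis; choosing either too loosely would break the contradiction. One should also record the harmless sanity check that $6\epsilon/\gamma < 1$, so that the constraint $c_j \ge 1-6\epsilon/\gamma$ is meaningful, which holds since $\epsilon < \gamma/100$.
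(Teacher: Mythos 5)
Your proof is correct and follows essentially the same route as the paper's: assume $(6\epsilon/\gamma,2\epsilon)$-closeness, pass to the unperturbed rows at a cost of $2\epsilon$ to get $\|W^t-\sum_k c_k M^k\|_1<4\epsilon$, bound the weight that the convex combination places on $W^t$ by $1-4\epsilon/\gamma$ using $c_j\ge 1-6\epsilon/\gamma$ and $A_{j,t}<1-10\epsilon/\gamma$, and conclude via $\gamma$-robust simpliciality. The only difference is that you spell out explicitly (via the factorization $(1-\beta_t)(W^t-W^{-t})$) the step the paper states in one line.
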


\begin{proof}
Assume towards contradiction that $M'^j$ is $(6\epsilon/\gamma, 2\epsilon)$-close to the canonical row $M'^i$ which is a perturbation of $W^t$. By definition there must be probability distribution $c\in \R^n$ over the rows such that $c_j \ge 1-6\epsilon/\gamma$, and $\|M'^i - \sum_{k=1}^n c_k M'^k\|_1 \le 2\epsilon$. Now we instead consider the unperturbed matrix $M$, since every row of $M'$ is $\epsilon$ close (in $\ell_1$ norm) to $M$ we know $\| M^i - \sum_{k=1}^n c_k M^k \|_1 \le 4\epsilon$. Now we represent $M^i$ and $\sum_{k=1}^n c_k M^k$ as convex combinations of rows of $W$ and consider the coefficient on $W^t$. Clearly $M^i = W^t$ so the coefficient is 1. But for $\sum_{k=1}^n c_k M^k$, since $c_j\ge 1-6\epsilon/\gamma$ and the coefficient $A_{j,t} \le 1-10\epsilon/\gamma$, we know the coefficient of $W^t$ in the sum must be strictly smaller than $1-10\epsilon/\gamma+6\epsilon/\gamma = 1-4\epsilon/\gamma$. By the robustly simplicial assumption $M^i$ and $\sum_{k=1}^n c_k M^k$ must be more than $4\epsilon/\gamma \cdot \gamma = 4\epsilon$ apart in $\ell_1$ norm, which contradicts our assumption. 
\end{proof}

As a corollary:

\begin{corollary}
If $A_{j,t}$ is smaller than $1-10\epsilon/\gamma$ for all $t$, the row $M'^j$ cannot be a robust loner.
\end{corollary}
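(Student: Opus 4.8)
\medskip
\noindent\textbf{Proof idea.}
The plan is to read the corollary off directly from Lemma~\ref{lem:farfromcanonical}, applied once for each topic, together with Claim~\ref{claim:repsentationwitherror}; no new estimate is needed. The key observation is that the hypothesis ``$A_{j,t} < 1-10\epsilon/\gamma$ for all $t$'' says exactly that \emph{every} canonical row lies outside the neighborhood of $M'^j$, whereas Claim~\ref{claim:repsentationwitherror} says that $M'^j$ is within $\ell_1$-distance $2\epsilon$ of the convex hull of those very rows. Putting these together, $M'^j$ is within $2\epsilon$ of the convex hull of the rows outside its neighborhood, i.e.\ it is close to (not far from) that convex hull, and hence is not a robust loner.

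I would carry this out as follows. First, apply Lemma~\ref{lem:farfromcanonical} with each topic index $t \in \{1, \dots, r\}$ in turn: since $A_{j,t} < 1-10\epsilon/\gamma$, the row $M'^j$ is not $(6\epsilon/\gamma, 2\epsilon)$-close to the canonical row $M'^{f(t)}$ corresponding to $W^t$. Consequently none of the $r$ canonical rows $M'^{f(1)}, \dots, M'^{f(r)}$ lies in the $(6\epsilon/\gamma, 2\epsilon)$-neighborhood of $M'^j$ — all of them are among the rows \emph{outside} that neighborhood. (The hypothesis also forces $j$ itself not to be a canonical index, since $j = f(t)$ would give $A_{j,t} = 1$.) Second, apply Claim~\ref{claim:repsentationwitherror}: the point $\sum_{k=1}^r A_{j,k}\, M'^{f(k)}$ — which is a genuine convex combination because the $A_{j,k}$ are nonnegative and sum to $1$ — has $\ell_1$-distance at most $2\epsilon$ from $M'^j$. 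Since every $M'^{f(k)}$ appearing in it lies outside the neighborhood of $M'^j$, this witness lies in the convex hull of the rows outside the neighborhood. Therefore $M'^j$ is at $\ell_1$-distance at most $2\epsilon$ from the convex hull of the rows outside its $(6\epsilon/\gamma, 2\epsilon)$-neighborhood; in particular it is not \emph{far} from that convex hull, so $M'^j$ cannot be a robust loner.

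I do not expect a genuine obstacle here; the only care needed is parameter bookkeeping. One has to make sure that the neighborhood appearing in the definition of ``robust loner'' uses the same parameters $(6\epsilon/\gamma, 2\epsilon)$ as Lemma~\ref{lem:farfromcanonical}, so that ``lying outside the neighborhood'' in the robust-loner test is literally the conclusion of the lemma — the gap between the $10\epsilon/\gamma$ threshold of the lemma and the $6\epsilon/\gamma$ neighborhood radius has already been absorbed into the lemma's proof. One should also keep the standing assumption $\epsilon < \gamma/100$ from Theorem~\ref{thm:betterseparablenoise} in force, so that the small quantities ($6\epsilon/\gamma < 1/2$, and the $2\epsilon$ slack translating to an $O(\epsilon/\gamma)$ slack in barycentric coordinates) behave as the arguments of Lemma~\ref{lem:farfromcanonical} and Claim~\ref{claim:repsentationwitherror} require. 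With those checks, the corollary follows at once.
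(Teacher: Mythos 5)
Your proof is correct and follows the same route as the paper: apply Lemma~\ref{lem:farfromcanonical} once for each topic $t$ to conclude that no canonical row lies in the $(6\epsilon/\gamma,2\epsilon)$-neighborhood of $M'^j$, then use Claim~\ref{claim:repsentationwitherror} to exhibit a point of the convex hull of those (outside) rows within $\ell_1$-distance $2\epsilon$ of $M'^j$, so $M'^j$ fails the robust-loner test. Your reading of ``robust loner'' as requiring distance \emph{more} than $2\epsilon$ from the convex hull of rows outside the neighborhood is the meaning the paper actually uses (its stated definition has the inequality reversed), so your argument coincides with the paper's proof.
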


\begin{proof}
By the above lemma, we know the canonical rows are not in the $(6\epsilon/\gamma,2\epsilon)$ neighborhood of $M'^j$. Thus by Claim~\ref{claim:repsentationwitherror} the row is close to the convex hull of canonical rows and cannot be a robust loner.
\end{proof}

Next we prove the other direction: a canonical row is necessarily a robust loner:

\begin{lemma}
All canonical rows are robust loners.
\end{lemma}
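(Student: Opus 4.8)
The plan is to prove that each canonical row $M'^{f(i)}$ is \emph{far} from the convex hull of the rows lying outside its $(6\epsilon/\gamma,2\epsilon)$-neighborhood — concretely, at $\ell_1$-distance more than $2\epsilon$ (in fact more than $4\epsilon$) — which is exactly the robust-loner condition. The whole argument rests on first pinning down which rows land in that neighborhood.

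\textbf{Step 1 (a membership criterion for the neighborhood).} I would first show: if $A_{l,i} \ge 1 - 6\epsilon/\gamma$, then $M'^l$ lies in the $(6\epsilon/\gamma,2\epsilon)$-neighborhood of $M'^{f(i)}$. The certificate is the $l$-th row of $A$ itself, transported onto the canonical rows: set $b_{f(k)} = A_{l,k}$ for each $k$ and $b = 0$ on all non-canonical rows. Then $b$ is a probability vector with $b_{f(i)} = A_{l,i} \ge 1-6\epsilon/\gamma$, and since $M^l = \sum_k A_{l,k}W^k = \sum_k A_{l,k}M^{f(k)}$ holds exactly for the unperturbed matrix, passing to $M'$ (each row moves by at most $\epsilon$ in $\ell_1$) gives $\|M'^l - \sum_k b_k M'^k\|_1 \le 2\epsilon$. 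Hence $M'^{f(i)}$ is $(6\epsilon/\gamma,2\epsilon)$-close to $M'^l$. Taking the contrapositive, \emph{every} row $M'^l$ outside the neighborhood of $M'^{f(i)}$ satisfies $A_{l,i} < 1-6\epsilon/\gamma$; and $M'^{f(i)}$ is trivially in its own neighborhood, so it is not one of these rows.

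\textbf{Step 2 (distance to the convex hull of those rows).} Let $x = \sum_l c_l M'^l$ be an arbitrary convex combination of rows $M'^l$ outside the neighborhood, and let $x_0 = \sum_l c_l M^l$ be the same combination of the unperturbed rows. Then $\|x - x_0\|_1 \le \epsilon$, and with $M^{f(i)} = W^i$ we get $\|M'^{f(i)} - x\|_1 \ge \|W^i - x_0\|_1 - 2\epsilon$. Expanding in the rows of $W$, $x_0 = \sum_k d_k W^k$ with $d_k = \sum_l c_l A_{l,k} \ge 0$, $\sum_k d_k = 1$, and, by Step 1, $d_i = \sum_l c_l A_{l,i} < 1-6\epsilon/\gamma$ (a convex average of values each below this threshold). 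Rescaling, $W^i - x_0 = (1-d_i)\left(W^i - \sum_{k\ne i}\frac{d_k}{1-d_i}W^k\right)$, whose bracketed factor is $W^i$ minus a convex combination of the other rows of $W$; since $W$ is $\gamma$-robustly simplicial this has $\ell_1$-norm at least $\gamma$, so $\|W^i - x_0\|_1 \ge (1-d_i)\gamma > 6\epsilon$. Hence $\|M'^{f(i)} - x\|_1 > 4\epsilon$, and since $x$ was arbitrary this shows $M'^{f(i)}$ is a robust loner.

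The bookkeeping here is light; the one step needing genuine care is Step 1 — exhibiting the explicit distribution on the canonical rows and verifying it witnesses neighborhood membership — together with the rescaling trick at the end of Step 2 that turns the ``effective weight'' $d_i$ on $W^i$ into an honest convex combination of the remaining rows of $W$, which is what lets the robustly-simplicial hypothesis bite. The standing assumption $\epsilon < \gamma/100$ enters only to guarantee $6\epsilon/\gamma < 1$, so that the threshold $1-6\epsilon/\gamma$ is meaningful; if it were not, every row would lie in the neighborhood and the statement would hold vacuously.
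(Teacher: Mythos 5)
Your proof is correct and follows essentially the same route as the paper's: first show (via the explicit certificate transported onto the canonical rows) that any row with $A_{l,i}\ge 1-6\epsilon/\gamma$ lies in the $(6\epsilon/\gamma,2\epsilon)$-neighborhood of the canonical row for $W^i$, so rows outside the neighborhood all have weight below $1-6\epsilon/\gamma$ on $W^i$, and then invoke $\gamma$-robust simpliciality to push any convex combination of such rows at least $6\epsilon$ away from $W^i$, losing only $2\epsilon$ to the perturbations. Your write-up is in fact a bit more careful than the paper's (the explicit rescaling that makes the robustly-simplicial hypothesis apply, and the correct reading of the robust-loner definition as distance \emph{more} than $2\epsilon$), but the argument is the same.
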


\begin{proof}
Suppose $M'^i$ is a canonical row that corresponds to $W^t$. We first observe that all the rows that are outside the $(6\epsilon/\gamma, 2\epsilon)$ neighborhood of $M'^j$ must have $A_{j,t} < 1-6\epsilon/\gamma$. This is because when $A_{j,t} \ge 1-6\epsilon/\gamma$ we have $M^j - \sum_{k = 1}^r A_{j,t} W^t = \vec{0}$. If we replace $M^j$ by $M'^j$ and $W^t$ by the corresponding canonical row, the distance is still at most $2\epsilon$ and the coefficient on $M'^i$ is at least $1-6\epsilon/\gamma$. By definition the corresponding row $M'^j$ must be in the neighborhood of $M'^i$.

Now we try to represent $M'^i$ with convex combination of rows that has $A_{j,t} < 1-6\epsilon/\gamma$. However this is impossible because every point in the convex combination will also have weight smaller than $1-6\epsilon/\gamma$ on $W^t$, while $M^i$ has weight 1 on $W^t$. The $\ell_1$ distance between $M_i$ and the convex combination of the $M_j$'s where $A_{j,t} < 1-6\epsilon/\gamma$ is a least $6\epsilon$ by robust simplicial property. Even when the points are perturbed by $\epsilon$ (in $\ell_1$) the distance can change by at most $2\epsilon$ and is still more than $2\epsilon$. Therefore $M'^i$ is a robust loner.
\end{proof}

Now we can prove the main theorem of this section:

\vspace{0.75pc}

\begin{proof}
Suppose we know $\gamma$ and $100\epsilon < \gamma$.When $\gamma$ is so small we have the following claim:

\begin{claim}
If $A_{j,t}$ and $A_{i,l}$ is at least $1-10\epsilon/\gamma$, and $t\ne l$, then $M'^j$ cannot be $(10\epsilon/\gamma, 2\epsilon)$-close to $M'^i$ and vice versa.
\end{claim}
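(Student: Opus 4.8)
The plan is to establish the claim by a "conflicting coefficient on $W^t$" argument, exactly in the spirit of Lemma~\ref{lem:farfromcanonical}, and then combine it with the two directions already proved (canonical rows are robust loners; rows with no large $A_{j,t}$ are not robust loners) to conclude that the algorithm, by collecting all robust loners and pruning down to one per topic, outputs $r$ almost anchor words with the stated representation. First I would prove the claim: suppose for contradiction $A_{j,t}, A_{i,l} \ge 1-10\epsilon/\gamma$ with $t\ne l$, and $M'^j$ is $(10\epsilon/\gamma,2\epsilon)$-close to $M'^i$. Unwinding the definition, there is a distribution $c$ over rows with $c_j \ge 1-10\epsilon/\gamma$ and $\|M'^i - \sum_k c_k M'^k\|_1 \le 2\epsilon$; passing to the unperturbed matrix $M$ costs an extra $2\epsilon$ on each side, giving $\|M^i - \sum_k c_k M^k\|_1 \le 4\epsilon$. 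Now expand both sides in the basis of rows of $W$ and track the coefficient on $W^l$: since $A_{i,l}\ge 1-10\epsilon/\gamma$, $M^i$ has $W^l$-coefficient at least $1-10\epsilon/\gamma$; but $\sum_k c_k M^k$ puts weight at least $c_j\ge 1-10\epsilon/\gamma$ on $M^j$, whose $W^l$-coefficient $A_{j,l}$ is at most $1 - A_{j,t} \le 10\epsilon/\gamma$ (using $t\ne l$ and that row $j$ of $A$ sums to $1$), so the $W^l$-coefficient of the combination is at most $10\epsilon/\gamma + (1-c_j) \le 20\epsilon/\gamma$. Hence the two $W$-representations differ by more than $1 - 30\epsilon/\gamma > 1/2$ (say) in their $W^l$-coefficient, and since the difference vector $x$ of $W$-coefficients has $\|x\|_1$ at least that, the $\gamma$-robustly-simplicial property of $W$ forces $\|M^i - \sum_k c_k M^k\|_1 \ge \gamma\cdot\|x\|_1 > \gamma/2 > 50\epsilon > 4\epsilon$, a contradiction. (Here I am using $100\epsilon<\gamma$ to make all the $\epsilon/\gamma$ slack terms negligible; the exact constants can be tuned.)

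Next I would assemble the theorem. By the lemma "all canonical rows are robust loners" there are at least $r$ robust loners, one lying within $\ell_1$-distance $\epsilon$ of each $W^t$. By the corollary to Lemma~\ref{lem:farfromcanonical}, every robust loner $M'^j$ has $A_{j,t}\ge 1-10\epsilon/\gamma$ for \emph{some} $t$; so each robust loner is "attached" to a topic. The claim just proved shows that two robust loners attached to \emph{different} topics cannot be $(10\epsilon/\gamma,2\epsilon)$-close to each other, whereas two robust loners attached to the \emph{same} topic \emph{are} mutually $(10\epsilon/\gamma,2\epsilon)$-close: if $A_{j,t}, A_{i,t}\ge 1-10\epsilon/\gamma$ then $M^j$ and $M^i$ both have $W^t$-coefficient $\ge 1-10\epsilon/\gamma$, so writing $c=\vec e_i$ gives $\|M'^j - M'^i\|_1 \le \|M^j-M^i\|_1 + 2\epsilon$, and $\|M^j - M^i\|_1$ is small because both are within $10\epsilon/\gamma$ (in the simplex metric, hence $O(\epsilon/\gamma)$ in $\ell_1$ up to the condition number) of $W^t$ — more simply, one checks $M^i$ is itself a valid witness $c$ up to the allowed slack. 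Thus the $(10\epsilon/\gamma,2\epsilon)$-closeness relation, restricted to robust loners, has exactly $r$ equivalence-like clusters, one per topic; the algorithm picks one representative per cluster, yielding $r$ rows $M'^{j_1},\dots,M'^{j_r}$ with $A_{j_s,s}\ge 1-10\epsilon/\gamma$, i.e. the $s$-th row of $M$ is $(1-O(\epsilon/\gamma))W^s + O(\epsilon/\gamma)W^{-s}$ with $W^{-s}$ in the convex hull of the other rows of $W$, which is precisely the conclusion. Feasibility and polynomial running time follow since each $(\delta,\epsilon)$-closeness test and each robust-loner test is a linear program, as noted after the definition.

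The main obstacle is bookkeeping the constants so that all the thresholds line up: the neighborhood radius $6\epsilon/\gamma$ used to define robust loner, the $10\epsilon/\gamma$ appearing in Lemma~\ref{lem:farfromcanonical}, and the $10\epsilon/\gamma$ in the present claim must be mutually consistent so that (a) canonical rows survive as robust loners, (b) every robust loner has some $A_{j,t}\ge 1-10\epsilon/\gamma$, and (c) the "same topic" vs "different topic" dichotomy is clean — i.e. the $(10\epsilon/\gamma,2\epsilon)$ relation never links across topics but always links within. The slack in each step is only a constant multiple of $\epsilon/\gamma$, so as long as $\epsilon<\gamma/100$ there is room, but one has to be careful that the factor lost in converting the simplex ("distribution") distance to $\ell_1$ distance — which in general costs a factor $1/\gamma$ via robust simplicity — is accounted for, and that the "$+2\epsilon$ per perturbation" is charged consistently on both sides. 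Once the constants are pinned down the argument is purely the coefficient-comparison trick used twice more, so I do not expect any conceptual difficulty beyond this.
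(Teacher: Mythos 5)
Your overall route is the same as the paper's (the paper proves this claim by saying it is ``almost identical'' to Lemma~\ref{lem:farfromcanonical}: assume $(10\epsilon/\gamma,2\epsilon)$-closeness, pass from $M'$ to $M$ at a cost of $2\epsilon$, and compare coefficients in the representation over the rows of $W$). But your crucial quantitative step is not justified: from the fact that the two coefficient vectors differ by at least $1-30\epsilon/\gamma$ in the $W^l$-coordinate you conclude that ``the $\gamma$-robustly-simplicial property of $W$ forces $\|M^i-\sum_k c_k M^k\|_1 \ge \gamma\,\|x\|_1$,'' where $x$ is the difference of the two coefficient vectors. That inequality is the $\ell_1$-\emph{condition-number} property $\Gamma(W)\ge\gamma$ (Definition of $\Gamma$ in Section~\ref{sec:cond}), which is strictly stronger than $\gamma$-robust simpliciality and is not among the hypotheses of Theorem~\ref{thm:betterseparablenoise}. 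For example, the four vertices of a square are $\beta$-robustly simplicial for some $\beta>0$, yet the two diagonal midpoints coincide, so two convex combinations whose coefficients differ by $1/2$ on every vertex can produce identical points; robust simpliciality alone gives no lower bound of the form $\gamma\|x\|_1$ for a general signed, mean-zero $x$.

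The reason this matters (and why Lemma~\ref{lem:farfromcanonical} does not transfer verbatim) is that in Lemma~\ref{lem:farfromcanonical} one of the two points is \emph{exactly} the vertex $W^t$, so the coefficient deficit multiplies the distance $\ge\gamma$ from $W^t$ to the hull of the other rows with no error on the other side. Here neither point is a vertex, and the honest version of the argument gives only
\[
\Bigl\lVert M^i-\sum_k c_k M^k\Bigr\rVert_1 \;\ge\; (1-30\epsilon/\gamma)\,\gamma \;-\; O(\epsilon/\gamma),
\]
because the $O(\epsilon/\gamma)$ coefficient mass sitting off $W^l$ (respectively off $W^t$) gets multiplied by the $\ell_1$ diameter of the simplex, not by $\gamma$. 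Under the stated hypothesis $\epsilon<\gamma/100$ the residual $O(\epsilon/\gamma)$ term need not be small compared with $\gamma$ (it is only small if $\epsilon=O(\gamma^2)$), so your chain ``$\ge \gamma/2 > 50\epsilon > 4\epsilon$'' does not go through; in fact one can build $\gamma$-robustly simplicial configurations (two vertices $W^l,W^t$ at distance exactly $\gamma$ along a direction $e$, and two other vertices separated along the same $e$ by a much larger amount) in which a row with $A_{i,l}\ge 1-10\epsilon/\gamma$ coincides exactly with a combination placing $1-10\epsilon/\gamma$ weight on an anchor row for $t$, so no argument can close this gap from robust simpliciality and $\epsilon<\gamma/100$ alone. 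Your step would be literally correct if one assumes $\Gamma(W)\ge\gamma$ (as holds for the matrix $R A^T$ in the paper's application, via Lemma~\ref{lem:compose}), or if one strengthens the hypothesis to $\epsilon=O(\gamma^2)$; as written, citing $\gamma$-robust simpliciality for the inequality $\|xW\|_1\ge\gamma\|x\|_1$ is the genuine gap. (Your second paragraph, re-proving the clustering step of the theorem, also asserts that two robust loners attached to the same topic are mutually close; the paper instead links them through the canonical row for that topic, which is the cleaner statement, but this is a minor point compared with the issue above.)
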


The proof is almost identical to Lemma~\ref{lem:farfromcanonical}. Also, the canonical row that corresponds to $W^t$ is $(10\epsilon/\gamma, 2\epsilon)$ close to all rows with $A_{j,t} \ge 1 - 10\epsilon/\gamma$. Thus if we connect two robust loners when one is $(10\epsilon/\gamma, 2\epsilon)$ close to the other, the connected component of the graph will exactly be a partition according to the row in $W$ that the robust loner is close to. We pick one robust loner in each connected component to get the almost anchor words.

Now suppose we don't know $\gamma$. In this case the problem is we don't know what is the right size of neighborhood to look at. However, since we know $\gamma > 100\epsilon$, we shall first run the algorithm with $\gamma = 100\epsilon$ to get 
$r$ rows $W'$ that are very close to the true rows in $W$. It is not hard to show that these rows are at least $\gamma/2$ robustly simplicial and at most $\gamma+2\epsilon$ robustly simplicial. Therefore we can compute the $\gamma(W')$ parameter for this set of rows and use $\gamma(W')-2\epsilon$ as the $\gamma$ parameter.
\end{proof}

\vspace*{-0.1in}

\section{Maximum Likelihood Estimation is Hard}\label{sec:mle}

Here we prove that computing the Maximum Likelihood Estimate (MLE) of the parameters of a topic model is $NP$-hard. We call this problem the Topic Model Maximum Likelihoood Estimation (TM-MLE) problem:

\begin{definition}[TM-MLE] Given $m$ documents and a target of $r$ topics, the TM-MLE problem asks to compute the topic matrix $A$ that has the largest probability of generating the observed documents (when the columns of $W$ are generated by a uniform Dirichlet distribution). 
\end{definition}

Surprisingly, this appears to be the first proof that computing the MLE estimate in a topic model is indeed computationally hard, although its hardness is certainly to be expected. On a related note, Sontag and Roy \cite{SR} recently proved that {\em given the topic matrix} and a document, computing the Maximum A Posteriori (MAP) estimate for the distribution on topics that generated this document is $NP$-hard. Here we will establish that TM-MLE is $NP$-hard via a reduction from the MIN-BISECTION problem: In MIN-BISECTION the input is a graph with $n$ vertices ($n$ is an even integer), and the goal is to partition the vertices into two equal sized sets of $n/2$ vertices each so as to minimize the number of edges crossing the cut.

\begin{theorem}
There is a polynomial time reduction from MIN-BISECTION to TM-MLE ($r = 2$). 
\end{theorem}

\begin{proof}
Suppose we are given an instance $G$ of the MIN-BISECTION problem with $n$ vertices and $m$ edges. We will now define an instance of the TM-MLE problem. First, we set the number of words to be $n$. For each word $i$, we construct $N = \lceil 200m^3\log n \rceil$ documents each of which contain the word $i$ twice and no other words. For each edge in the graph $G$, we construct a document whose two words correspond to the endpoints of the edge.

Suppose that $x = (x_1, x_2)^T$ is generated by the Dirichlet distribution $Dir(1,1)$. Consequently the probability that words $i$ and $j$ appear in a document with only two words is exactly $(A^ix)\cdot (A^j x)$. We can take the expectation of this term over the Dirichlet distribution $Dir(1,1)$ and hence the probability that a document (with exactly two words) contains the words $i$ and $j$ is $$ \E[(A^ix)\cdot (A^j x)] = \frac{1}{3} (A^i \cdot A^j) + \frac{1}{6} (A^i_1A^j_2 + A^j_1 A^i_2)$$ In the TM-MLE problem, our goal is to maximize the following objective function (which is the $\log$ of the probability of generating the collection of documents):

$$
OBJ = \sum_{\mbox{document $= \{i, j\}$}} \log \left[ \frac{1}{3} (A^i \cdot A^j) + \frac{1}{6} (A^i_1A^j_2 + A^j_1 A^i_2) \right].
$$

For any bisection, we define a canonical solution: the first topic is uniform on all words on one side of the bisection and the second topic is uniform on all words on the other side of the bisection.To prove the correctness of our reduction, a key step is to show that any candidate solution to the MLE problem must be close to a canonical solution. In particular, we show the following:

\begin{enumerate}
\item The rows $A^i$ have almost the same $\ell_1$ norm.
\item In each row $A^i$, almost all of the weight will be in one of the two topics.
\end{enumerate}

Indeed, canonical solutions have large objective value. Any canonical solution has objective value at least $ - Nn \log 3n^2/4 - m \log 3n^2/2$ (this is because documents with same words contribute $-\log 3n^2/4$ and documents with different words contribute at least $-\log 3n^2/2$).

Recall, in our reduction $N$ is large. Roughly, if one of the rows has $\ell_1$ norm that is bounded away from $2/n$ by at least 
 $1/20nm$, the contribution (to the objective function) of documents with a repeated word will decrease significantly and the solution cannot be optimal. To prove this we use the fact that the function $\log x^2 = 2 \log x$ is concave. Therefore when one of the rows has $\ell_1$ norm more than $2/n+1/20nm$, the optimal value for documents with a repeated word will be attained when all other rows have the same $\ell_1$ norm $2/n-1/20nm(n-1)$. Using a Taylor expansion, we conclude that the sum of terms for documents with a repeated word will decrease by at least $N/50m^2$ which is much larger than any effect the remaining $m$ documents can recoup.
In fact, an identical argument establishes that in each row $A^i$, the topic with smaller weight will always have weight smaller than $1/20nm$. 

Now we claim that among canonical solutions, the one with largest objective value corresponds to a minimum bisection. The proof follows from the observation that the value of the objective function is $- Nn \log 3n^2/4 - k \log 3n^2/2 - (m-k) \log 3n^2/4$ for canonical solutions, where $k$ is the number of edges cut by the bisection. In particular, the objective function of the minimum bisection will be at least an additive $\log 2$ larger than the objective function of a non-minimum bisection. 

However, even if the canonical solution is perturbed by $1/20nm$, the objective function will only change by at most $m\cdot 1/10m = 1/10$, which is much smaller than $\frac{\log 2}{2}$. And this completes our reduction.
\end{proof}

We remark that the canonical solutions in our reduction are all {\em separable}, and hence this reduction applies even when the topic matrix $A$ is known (and required) to be separable. So, even in the case of a separable topic matrix, it is $NP$-hard to compute the MLE. 

\vspace*{-0.1in}

\section{Conclusions}

We expect that versions of our algorithm may indeed be practical, and
are investigating this possibility. Our machine learning colleagues
suggest that real-life topic matrices satisfy even stronger {\em
  separability} assumptions, e.g., the presence of {\em many} anchor
words per topic instead of a single one. This is a promising suggestion,
but leveraging it in our algorithm is an open problem.

Is separability necessary for allowing polynomial-time algorithms for
the learning problems considered here? In other words, is the problem
difficult if the topic matrix $A$ is not {\em separable}? Average-case
intractability  seems more plausible here
than NP-completeness. 

\newpage

\section*{Acknowledgements}
We thank Dave Blei, Ravi Kannan, David Minmo, Sham Kakade, David Sontag for many helpful discussions throughout various stages of this work.

\end{document}